\newtheorem{theorem}{Theorem}
\newtheorem{definition}{Definition}
\newtheorem{remark}{Remark}
\newtheorem{claim}{Claim}
\newtheorem{lemma}{Lemma}
\newtheorem{assumption}{Assumption}
\newcommand{\rate}[1]{\alpha_{#1}}
\DeclareMathOperator{\MMSE}{MMSE}
\DeclareMathOperator{\MLE}{MLE}
\DeclareMathOperator{\UB}{UB}
\DeclareMathOperator{\LB}{LB}
\DeclareMathOperator{\AMP}{AMP}
\DeclareMathOperator{\G}{G}
\DeclareMathOperator{\F}{F}
\DeclareMathOperator{\I}{I}
\DeclareMathOperator{\SOS}{SOS}
\def\<{\langle}
\def\>{\rangle}
\def\E{{\mathbb E}} 
\def\reals{\mathbb{R}}
\def\bx{\mathbf{x}}
\def\bv{\mathbf{v}}
\def\bx{\mathbf{x}}
\def\bY{\mathbf{Y}}
\def\bZ{\mathbf{Z}}
\def\bU{\mathbf{U}}
\def\bV{\mathbf{V}}
\def\E{\mathbb{E}}
\def\normal{\mathcal{N}}
\def\F{{\sf F}}
\def\od{{^{\otimes d}}}
\def\P{\mathbb{P}}
\def\id{{\rm I}}
\def\ind{{\mathbb{I}}}
\def\E{{\mathbb E}} 
\def\id{{\rm I}}
\def\<{\langle}
\def\>{\rangle}
\def\ind{i_1 i_2\cdots i_d}
\def\indperm{\pi(i_1)\pi( i_2)\cdots \pi(i_h)}
\def\E{{\mathbb E}} 
\def\id{{\rm I}}
\def\<{\langle}
\def\>{\rangle}
\def\E{{\mathbb E}} 
\def\id{{\rm I}}
\def\<{\langle}
\def\>{\rangle}
\DeclareMathOperator*{\argmax}{argmax}
\newcommand{\Gaussian}{\mathcal{N}}
\renewcommand{\Pr}{\mathbb{P}}
\newcommand{\plant}{S_{\sf planted}}
\newcommand{\ra}[1]{\renewcommand{\arraystretch}{#1}}
\newcommand{\share}{r}
\crefname{appsec}{appendix}{appendices}
\definecolor{emphcol}{gray}{.75}
\def \LONG{}
\begin{document}

\title{Statistical and computational thresholds for the planted $k$-densest sub-hypergraph problem}
\author{\IEEEauthorblockN{Luca Corinzia, and
Paolo Penna}
\IEEEauthorblockA{
Department of Computer Science\\
ETH Zürich, Switzerland\\
Email: \{luca.corinzia,paolo.penna\}@inf.ethz.ch}
\and
\IEEEauthorblockN{Wojciech Szpankowski}
\IEEEauthorblockA{Department of Computer Science \\
Purdue University, USA\\
Email: szpan@purdue.edu}
\and 
\IEEEauthorblockN{Joachim M. Buhmann}
\IEEEauthorblockA{
Department of Computer Science\\
ETH Zürich, Switzerland\\
Email: jbuhmann@inf.ethz.ch}
}
\maketitle

\begin{abstract}
In this work, we consider the problem of recovery a planted $k$-densest sub-hypergraph on $d$-uniform hypergraphs. 
This fundamental problem appears in different contexts, e.g., community detection, average-case complexity, and neuroscience applications as a structural variant of tensor-PCA problem. We provide tight \emph{information-theoretic} upper and lower bounds for the exact recovery threshold by the maximum-likelihood estimator, as well as \emph{algorithmic} bounds based on approximate message passing algorithms.
The problem exhibits a typical statistical-to-computational gap observed in analogous sparse settings that widen with increasing sparsity of the problem. 
The bounds show that the signal structure impacts the location of the statistical and computational phase transition that the known existing bounds for the tensor-PCA model do not capture. This effect is due to the generic planted signal prior that this latter model addresses.
\end{abstract}
\ifdefined\LONG \else 
\textit{A full version of this paper is accessible at:}
\url{https://arxiv.org/pdf/2011.11500.pdf}
\fi

\section{Introduction}
High dimensional inference problems play a key role in recent machine learning and data analysis applications. Typical scenarios exhibit problem dimensions comparable to the sample size, hence precluding effective estimation with no further structure imposed on the underlying signal, such as low-rank or sparsity. 
Most of these problems, as sparse mean estimation, compress sensing, low-rank matrix estimation, and planted clique estimation can be regarded as the general problem of recovery a planted signal when the available observations are perturbed by noise. 

In this work we study the problem of recovery a \emph{planted} $k$-densest sub-hypergraph on $d$-uniform hypergraph over $p$ nodes. This problem, considered in \cite{corinzia2019}, is closely related to several community detection models and to the tensor-PCA. It addresses high order interactions described by hyperedges between nodes, that arise naturally in several applications, including brain region modeling \cite{gu2017functional, wang2012naive, zu2016identifying} and memory \cite{legenstein2018long}, computer vision applications \cite{jolion2012graph}, and network analysis \cite{Benson163,grilli2017higher,ke2019community}.

The planted solution consists of a subset of $k$ randomly chosen nodes out of $p$. The weights of all $\binom{k}{d}$ uniform hyperedges inside the planted solution have a bias $\beta$. The weights of all hyperedges are then perturbed with Gaussian noise, and the problem consists in detecting the necessary signal-to-noise ratio (snr) such that it is possible to recover the planted set given the noisy observations. 

In this paper, we study \emph{information-theoretic} (IT) and \emph{computational} algorithmic limits for this class of recovery problems, given by the parameters $d$, $k$ and $p$.
In particular, we are interested in the existence of the so-called \emph{statistical-to-computational} (SC) gaps in the \emph{sparse} regime $k \in o(p)$.

\subsection{Contribution}
We here summarize the two main contributions given in this paper.
\par (i) We provide upper and lower bounds on the recoverability threshold according to the \emph{vectorial} maximum-likelihood estimator (that is generally non-tractable). These thresholds are located at finite values according to a rescaled snr $\gamma$ defined in \Cref{eq:rescaling}, and are hence almost tight compared to previous bounds in \cite{corinzia2019}. The upper bound is based on a union bound analysis, see \Cref{th:UB}. The lower bound is a major technical contribution as it requires to address the dependencies between solution weights. The analysis is performed with two main arguments according to different regimes: recent tail bounds for the maximum of Gaussian r.v. with bounded correlation in \Cref{th:LB:max-Gaussians} and the generalized Fano's inequality in \Cref{th:infoTheoretic}, both combined with a classic covering argument from information theory. For $d \to +\infty$, the lower bound matches the IT threshold provided recently in \cite{niles2020all} for the tensorial MMSE estimator.
\par (ii) We provide a heuristic derivation of approximate-message-passing (AMP) algorithm for our problem, together with a state evolution analysis. The derivation generalizes the tensor-PCA AMP algorithm with a non-factorizable prior distribution. The recovery threshold for this class of algorithms is reported in \Cref{claim:amp_threshold}. 

\subsection{Related Work}
The study of the recovery of a planted signal in a probabilistic generative model has received much attention recently, as it constitutes a fertile ground for the analysis of the SG gaps.  
Many variations of the planted problem have been addressed in the literature. 
The stochastic block model for community detection in graphs \cite{abbe2016exact,mossel2015consistency,chen2014statistical} or hypergraphs \cite{barak2016nearly,ghoshdastidar2014consistency}) has been one of the first model to be studies and does not present any SC gap \cite{abbe2016exact,kim2018stochastic}. Despite these models have been mainly used with discrete Bernulli random variables, recent extension to \emph{weighted} edges have been proposed (see \cite{aicher2014learning,peixoto2018nonparametric}). 
Analogously, the dense matrix-PCA problem \cite{dia2016mutual, deshpande2014information} 
has been shown not to have any SC gap, as the proposed AMP algorithms match the statistical thresholds \cite{deshpande2014information}.

SC gaps have been first observed in the context of dense tensor-PCA \cite{richard2014statistical,hopkins2015tensor,montanari2016limitation,jagannath2020statistical,arous2020algorithmic, perry2020} and recently in the \emph{sparse} matrix \cite{barbier2020all} and tensor extension \cite{niles2020all, corinzia2021maximumlikelihood}. The last two works address only the statistical phase transition and are described in detail as the closest to our work. 
In \cite{niles2020all}, the statistical threshold for the MMSE estimator is fully characterized for the sparse tensor-PCA problem, and it is shown that the estimator undergoes an all-or-nothing transition. After proper rescaling, the snr threshold given in \cite{niles2020all} for the MMSE estimator is located exactly at the lower bound threshold here proved in \Cref{th:LB:max-Gaussians}. The characterization is, however, performed for the tensorial (multi-dimensional) MMSE estimator. This estimator is allowed to return any tensor in the unit ball, with no guarantee whether this can be used to recover the \emph{vectorial} planted signal. Moreover, the precise relation between the MMSE and the MLE here addressed is generally still unknown, as mentioned in \cite{niles2020all}. Indeed, \cite{corinzia2021maximumlikelihood} showed that a stronger condition on the MMSE behaviour produces an equivalent transition on the tensorial MLE, which imposes limitations on the sparsity regime for which the results apply. 

Other results on tensor-PCA bounds \cite{richard2014statistical,hopkins2015tensor,montanari2016limitation} with generic signal prior are not tight in most of the cases if applied to our problem in a black-box fashion (see, e.g. the comparison with prior upper and lower bounds given in \ifdefined\LONG \Cref{sec:TPCA}\else the appendix of the long paper\fi). This is due to the specific combinatorial structure of the planted vector (that is, in turn, a restriction on the signal prior) that is not specified for the generic spherical prior tensor-PCA.

Regarding the computational thresholds, the algorithm here described is an extension of the tensor-PCA AMP algorithm to sparse hypergraph settings. These algorithms have been shown to be information-theoretic optimal in numerous high dimensional statistical estimation problems (dense matrix-PCA \cite{deshpande2014information}, SBM \cite{abbe2018proof} etc.) and to outperform other class of algorithms in the case of SC gaps (see e.g., planted clique problem \cite{deshpande2015finding} and sparse matrix-PCA \cite{barbier2020all}). Nonetheless, AMP algorithms have been shown to underperform in the tensor-PCA problem \cite{lesieur2017statistical} to the sum-of-squares class \cite{hopkins2015tensor} and recently to averaged gradient descent \cite{biroli2020iron}. However, a recent work suggests that a hierarchy of such AMP algorithms may actually match the performance of the best known efficient algorithms \cite{Kikuchi-hierarchy}.
\section{Setting}
We study the problem of recovery a \emph{planted} sub-hypergraph on a $d$-uniform hypergraph over $p$ nodes. 
Every subset of $d$ nodes $\{i_1,i_2,\ldots,i_d\}$ is an \emph{hyperedge} whose weight $Y_{i_1,i_2,\ldots,i_d}$ is a Gaussian random variable, defined according to the following process. We denote by $\bx \in \mathcal{C}_{p,k} \subset \{0, 1\}^p$ the vector of selected nodes, with 
$$\mathcal{C}_{p,k} \coloneqq \{\bx \in \{0,1\}^p \colon \sum_i x_i = k \}.$$
Furthermore, we assume that $\bx$ is drawn uniformly at random in $\mathcal{C}_{p,k}$ with probability $\P_p$. 
The resulting weights are given by the $d$-order tensor $\bY \coloneqq \bY(\bx)$ in which all edges indicated by $\bx$ have a \emph{bias} $\beta\geq 0$, and the weights are perturbed by adding Gaussian \emph{noise} across all hyperedges $(i_1 i_2 \cdots i_d)$ with $i_1 < i_2 < \dots < i_d$.
For convenience let us now introduce the following tensor notation. The outer product of two tensors $\bU \in \bigotimes^{d_1} \reals^p$ and $\bV \in \bigotimes^{d_2} \reals^p$ is denoted by $\bU\otimes \bm{V}$ with entries $(\bU \otimes \bm{V})_{i_1 i_2\ldots i_{d_1} j_1j_2\cdots j_{d_2}} = \bU_{i_1i_2\cdots i_{d_1}}\bV_{j_1j_2\ldots j_{d_2}}$. For $\bm{x} \in \reals^p$, we define $\bm{x}^{\otimes d} = \bm{x} \otimes \cdots \otimes \bm{x} \in \bigotimes^d\reals^p$ as the $d$-th outer power of $\bm{x}$. The inner product of the two tensors $\bU \in \bigotimes^{d_1} \reals^p$ and $\bV \in \bigotimes^{d_2} \reals^p$ with $d_2 \le d_1$ is defined as 
$\< \bU , \bV \> = \sum_{j_1, \ldots ,j_{d_2}} \bU_{i_1, \ldots, i_{d_2-d_1}, j_1, \ldots
,j_{d_2}} \bV_{j_1, \ldots ,j_{d_2}} .$ 
Given a $d$-th order tensor $\bU \in \bigotimes^d \reals^p$, we define the map $\bU :\reals^p\to
\reals^p$ as
\begin{align}
\label{eq:mapping-tensors}
\bm{U}\{ \bm{x} \}_i = \sum_{i_2, \ldots , i_{d}}
U_{i,i_2, \ldots , i_{d}} \ x_{i_2} \cdot \ldots \cdot x_{i_{d}}.
\end{align} 
Using tensor notation, the observation model reads:
\begin{align}
\label{eq:weight-def}
\bY = (\beta \bx^{\otimes d} + \bZ) \mathbbm{1}_{\{i_1 < \dots < i_d\}}
\end{align}
where $\mathbbm{1}_{\{P\}} = 1$ if $P$ is true, and $0$ otherwise, and $Z_{i_1 i_2 \cdots i_h} \sim \mathcal{N}(0, 1)$. Note the main difference with the tensor-PCA formulation of the problem, where all the elements of the tensor are observed. For any signal $\bx$, we also consider the sum of all $\binom{k}{d}$ weights of the hyperedges with nodes in $\bx$ as: 
\begin{align}
S(\bx) &\coloneqq \sum_{i_1 < i_2 < \ldots < i_d} Y_{i_1, i_2 \ldots,i_d} x_{i_1} \cdot x_{i_2} \dots \cdot x_{i_d} \nonumber \\
& = \sum_i \bY\{\bx\}_i = \langle \bY, \bx^{\otimes d}\rangle \label{eq:sum-all-weights}
\end{align}

\begin{definition}[Partial and exact receovery]
\label{def:recovery}
A $k'$-partial recovery is achieved if there exists an estimator $\hat \bx$ that, with input the weight tensor $\bY(\bx)$ given by \eqref{eq:weight-def}, returns $\hat \bx = \hat{\bx}(\bY)$ such that 
$$\P(\langle \hat{\bx}, \bx \rangle \ge k') = 1 - o(1).$$
Exact recovery is achieved if
$$\P(\langle \hat \bx , \bx \rangle = k) = \P(\hat \bx = \bx) = 1 - o(1).$$
\end{definition}

\begin{definition}[Maximum-likelihood estimator]
\label{def:MLE}
The vectorial maximum-likelihood estimator is defined as 
$$\bx_{\MLE}(\bY) = \argmax\limits_{\hat \bx \colon \bY \to \hat{\bx}(\bY) \in \mathcal{C}_{p,k}} \P(\bY|\hat{\bx(\bY)})$$
We define as $P_r^{(k')} = \P(\langle \bx_{\MLE}, \bx \rangle \ge k')$ and $P_r$ accordingly.
\end{definition}

It is easy to see that the vectorial MLE estimator for the problem in \Cref{eq:weight-def} corresponds to the $k$-densest sub-hypergraph (from this the name of the problem, see Theorem~4 in \cite{corinzia2019} for a proof)
\begin{align}
\label{eq:mle_estimator}
\bx_{\MLE}(\bY) &= \argmax\limits_{\hat \bx \colon \bY \to \hat{\bx}(\bY) \in \mathcal{C}_{p,k}} \sum_{i_1 < \dots < i_d} Y_{i_1, \dots, i_d} \hat x_{i_1} \cdot \dots \cdot \hat x_{i_d} \nonumber \\
&\hspace{-1cm} =  \argmax\limits_{\hat \bx \colon \bY \to \hat{\bx}(\bY) \in \mathcal{C}_{p,k}} \langle \bY, \hat{\bx}^{\otimes d} \rangle = \argmax\limits_{\hat \bx \colon \bY \to \hat{\bx}(\bY) \in \mathcal{C}_{p,k}} S(\hat{\bx})
\end{align}

Our bounds depend on the scale-normalized snr:  \begin{equation}
\label{eq:rescaling}
\gamma \coloneqq  \beta \sqrt{\frac{\binom{k}{d}}{k}\cdot \frac{1}{2\log p}}.
\end{equation}
This scaling incorporates the parameters $k$ and $d$ of the problem and it will result in information-theoretic thresholds located at finite values. 

\begin{remark}
\label{rem:SNR}
The  scale-normalized snr $\gamma$ in \Cref{eq:rescaling} can be seen as the effective snr of the problem, given by \emph{total signal} / \emph{total noise}. The total signal is $\beta$, times the number of planted edges, hence $\beta \binom{k}{d}$. The total noise is the standard deviation $\sqrt{\binom{k}{d}}$ times the scale of the number of solutions $\sqrt{2\log \binom{p}{k}} \approx \sqrt{2k \log p}$. The latter rescale has the following intuitive justification. If we assume that $\binom{p-k}{k} \approx \binom{p}{k}$ \emph{unbiased} solutions are \emph{independent}, then their \emph{maximum} is located at $\sqrt{\binom{k}{d}}  \sqrt{2 \log p}$. The total signal has then to exceed this quantity, in order for the recovery to be possible. This argument ignores the dependencies between solutions, but it provides the right scaling of the snr. 
\end{remark}
Note that all these parameters may depend on $p$, that is, we  consider $k=k_p$ and $d = d_p$. Throughout the paper, we hide the dependency on $p$ for readability. 
In most of the analysis, the following rescaling of the involved quantities will appear naturally
\begin{equation}
\label{eq:rate}
\rate{q} \coloneqq \lim_{p \rightarrow +\infty} \frac{\log q}{\log p}
\end{equation}
which intuitively means that $q \approx  n^{\rate{q}}$. We call $\rate{q}$ the \emph{rate} of a generic $q=q_p$.
\section{Information-Theoretic Bounds}
\label{sec:infoTheoryandUpperBounds}
By the MLE estimator's characterisation given in \Cref{eq:mle_estimator}, the recovery regime is regulated by the weight $S(\bx)$ of the planted solution and how it compares to the best among all other solutions' weights. For the analysis, it is useful to partition the latter according to their overlap with the planted solution. 

\begin{lemma}
\label{le:fail-rec-union}
For any $m \in \{0, \ldots, k\}$, let $$\mathcal{S}_m = \{ \hat \bx \in \mathcal{C}_{p,k} \colon \langle \bx, \hat \bx \rangle = m\}$$ denote the set of all solutions that share exactly $m$ nodes with the planted solution $\bx$. Then the following bound holds for all $k' \in\{0,\ldots,k\}$:
\begin{align}
\label{eq:UB:max-based}  
1 - P_r^{(k')} &\leq \sum_{m=0}^{k'-1} \P\left( S(\bx) < \max_{\hat \bx \in \mathcal{S}_{m}} S(\hat \bx)\right).
\end{align}
For all $m <k$ it further holds:
\begin{align}
\label{eq:LB:max-based}
1 - P_r &\geq  \P\left( S(\bx) < \max_{\hat{\bx} \in \mathcal{S}_m} S(\hat{\bx})\right).
\end{align}
\end{lemma}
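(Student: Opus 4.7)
The plan is to unpack the definition of the vectorial MLE and translate the events of failed partial and exact recovery into comparisons between the score $S(\bx)$ of the planted solution and the scores of solutions at prescribed overlap levels, then apply a union bound for the upper part.

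\textbf{Upper bound.} First I would observe that since $\bx_{\MLE}(\bY) = \argmax_{\hat{\bx} \in \mathcal{C}_{p,k}} S(\hat{\bx})$ by \Cref{eq:mle_estimator}, and since $\bx \in \mathcal{C}_{p,k}$ is a feasible candidate, we always have $S(\bx_{\MLE}) \geq S(\bx)$. The failure event for $k'$-partial recovery is
\[
\{\langle \bx_{\MLE}, \bx \rangle < k'\} = \bigcup_{m=0}^{k'-1} \{\bx_{\MLE} \in \mathcal{S}_m\},
\]
since the sets $\mathcal{S}_0,\ldots,\mathcal{S}_k$ partition $\mathcal{C}_{p,k}$. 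On the event $\{\bx_{\MLE} \in \mathcal{S}_m\}$, the maximizer belongs to $\mathcal{S}_m$, hence $\max_{\hat{\bx} \in \mathcal{S}_m} S(\hat{\bx}) \geq S(\bx)$. Then I would apply a union bound across $m \in \{0,\ldots,k'-1\}$ to get \Cref{eq:UB:max-based}. The only subtlety is that the lemma states a strict inequality $S(\bx) < \max$, while the direct inclusion yields $S(\bx) \leq \max$; this is resolved by noting that the Gaussian noise makes ties a probability-zero event (the continuous distribution of each $Y_{i_1 \cdots i_d}$ ensures $\P(S(\bx) = S(\hat{\bx})) = 0$ for $\hat{\bx} \neq \bx$), so replacing $\leq$ by $<$ does not change probabilities.

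\textbf{Lower bound.} For any fixed $m < k$, note that $\mathcal{S}_m$ does not contain the planted $\bx$ itself (which lies in $\mathcal{S}_k$). Hence on the event $\{S(\bx) < \max_{\hat{\bx} \in \mathcal{S}_m} S(\hat{\bx})\}$, there exists some $\hat{\bx} \neq \bx$ with $S(\hat{\bx}) > S(\bx)$, which immediately implies $\bx_{\MLE} \neq \bx$ and therefore exact-recovery fails. This gives the set inclusion
\[
\Big\{S(\bx) < \max_{\hat{\bx} \in \mathcal{S}_m} S(\hat{\bx})\Big\} \subseteq \{\bx_{\MLE} \neq \bx\},
\]
and taking probabilities yields \Cref{eq:LB:max-based}.

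\textbf{Main obstacle.} There is no substantive technical hurdle here; the lemma is essentially a combinatorial reformulation of the MLE failure event via the partition $\{\mathcal{S}_m\}_{m=0}^k$ of $\mathcal{C}_{p,k}$ according to overlap with the planted signal. The only care required is the measure-theoretic handling of ties in the definition of the argmax, which is negligible thanks to the Gaussian noise. The real technical work, which this lemma enables, comes later in controlling the tail of $\max_{\hat{\bx} \in \mathcal{S}_m} S(\hat{\bx})$ for each overlap slice $m$, a task that involves the dependencies among solutions in $\mathcal{S}_m$ and is addressed in the subsequent sections.
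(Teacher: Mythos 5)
Your proof is correct and follows essentially the same route as the paper's: decompose the $k'$-partial-recovery failure event by the overlap of the MLE with the planted solution, apply a union bound over the slices $\mathcal{S}_0,\ldots,\mathcal{S}_{k'-1}$, and for the lower bound note that a strictly better solution in any $\mathcal{S}_m$ with $m<k$ forces $\bx_{\MLE}\neq\bx$. Your explicit handling of ties via the continuity of the Gaussian noise is a small point of extra care that the paper's proof glosses over.
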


The proof of this lemma is given by a union bound and is given in the \ifdefined\LONG appendix\else long version of the paper\fi. Given \Cref{def:MLE} and the latter inequalities, we can reduce the analysis of the recovery regime to the determination of the values of $\gamma$ for which $\P\left(S(\bx) < \max_{\hat{\bx} \in \mathcal{S}_m} S(\hat{\bx})\right)$ vanishes or has limit 1, for $p \to +\infty$. 
In the first scenario (recovery regime), these probabilities need to vanish \emph{sufficiently fast} to apply the union bound in \Cref{eq:UB:max-based} over all different $m$.

\subsection{Upper bound (partial or exact recovery)}
\label{sec:UB}
We here provide upper bounds on the failure probability of partial and exact recover. The proof is given in \ifdefined\LONG appendix\else long version of the paper\fi.

\begin{theorem}
\label{th:UB}
For any $k$ and any $k'\in\{1,\ldots, k\}$, and for any $\gamma > \gamma^{({k'})}_{\UB}$, the MLE estimator achieves $k'$-partial recovery according to \Cref{def:recovery}. 
The critical gamma is defined as 
$$\gamma^{({k'})}_{\UB}  \coloneqq \sqrt{1+ \rate{k}-2 \rate{k - k'}+\rate{k'}} + \sqrt{\rate{k} - \rate{k - k'}+\rate{k'}}$$ where $\rate{k}$, $\rate{k'}$ and $\rate{k-k'}$ are defined according to \Cref{eq:rate}.
\end{theorem}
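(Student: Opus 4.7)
The natural plan is to combine the decomposition of Lemma~\ref{le:fail-rec-union} with a two-step Gaussian tail argument for each overlap class $\mathcal{S}_m$, and then identify the dominant $m\in\{0,\ldots,k'-1\}$ in the rate limit. For fixed $m$, every competitor $\hat{\bx}\in\mathcal{S}_m$ satisfies $S(\hat{\bx})\sim\mathcal{N}(\beta\binom{m}{d},\binom{k}{d})$, while $S(\bx)\sim\mathcal{N}(\beta\binom{k}{d},\binom{k}{d})$; the number of competitors is $|\mathcal{S}_m|=\binom{k}{m}\binom{p-k}{k-m}$.

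I would bound the $m$-th summand by introducing a threshold $t_m=\beta\binom{k}{d}-\delta_m$ and splitting
\begin{align*}
\P\bigl(S(\bx) < \max_{\mathcal{S}_m} S(\hat{\bx})\bigr) &\leq \P(S(\bx) < t_m) \\
&\quad + \P\bigl(\exists\,\hat{\bx}\in\mathcal{S}_m:\,S(\hat{\bx}) > t_m\bigr).
\end{align*}
The first term is a single Gaussian tail bounded by $\exp(-\delta_m^2/(2\binom{k}{d}))$; the second is controlled by a union bound over $\mathcal{S}_m$ times the Gaussian tail $\exp(-(\beta(\binom{k}{d}-\binom{m}{d})-\delta_m)^2/(2\binom{k}{d}))$. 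This split, rather than a direct tail bound on the correlated difference $S(\hat\bx)-S(\bx)$, is precisely what produces the final $\sqrt{a}+\sqrt{b}$ structure: one $\sqrt{\cdot}$ arises from the fluctuation budget that $\delta_m$ allocates to $S(\bx)$, the other from the union-bound cost over $\mathcal{S}_m$. Choosing $\delta_m$ so that both terms decay faster than $1/k$ (to survive the outer sum over $m$) and applying the rescaling of \eqref{eq:rescaling} converts this into a condition $\gamma>\gamma_m$, where $\gamma_m$ depends on $\log|\mathcal{S}_m|$ and on the ratio $1-\binom{m}{d}/\binom{k}{d}$.

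The final step is to take the worst $m$. Using standard combinatorial estimates of the form $\binom{a}{b}\leq(ae/b)^b$ on both factors of $|\mathcal{S}_m|$ and the rate definition of \eqref{eq:rate}, $\log|\mathcal{S}_m|$ can be expressed through $\rate{k}$, $\rate{m}$, and $\rate{k-m}$; maximising $\gamma_m$ over $m\in\{0,\ldots,k'-1\}$ should single out the boundary $m=k'-1$, where the overlap is the largest allowed and hence the signal gap is the smallest. The substitutions $\rate{m}\to\rate{k'}$ and $\rate{k-m}\to\rate{k-k'}$ then yield the claimed expression $\gamma^{(k')}_{\UB}=\sqrt{1+\rate{k}-2\rate{k-k'}+\rate{k'}}+\sqrt{\rate{k}-\rate{k-k'}+\rate{k'}}$. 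The main obstacle I anticipate is verifying that $m=k'-1$ really is the maximiser across all sparsity regimes: because the suppression factor $1-\binom{m}{d}/\binom{k}{d}$ is nonlinear in $m,k,d$, the map $m\mapsto\gamma_m$ need not be monotone in general, and a case analysis on the relative growth of $k$, $d$, $m$ with $p$ will likely be required. Secondary bookkeeping -- absorbing the $\log k$ and $\log(1/\epsilon)$ corrections into the rate limit and noting that the $k'$-fold outer union contributes only a $\mathrm{polylog}(p)$ factor negligible at the level of rates -- is routine.
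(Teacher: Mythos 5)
You correctly identify the overall architecture --- the decomposition of \Cref{le:fail-rec-union}, a threshold split for each overlap class, and a worst case over $m$ with $m=k'-1$ dominating --- but there is a genuine gap in the core estimate: you compare $S(\bx)$ and $S(\hat{\bx})$ \emph{directly}, so both Gaussians carry the full variance $\binom{k}{d}$ while the mean gap is only $\beta\bigl(\binom{k}{d}-\binom{m}{d}\bigr)$. Your split then requires $\beta\bigl(\binom{k}{d}-\binom{m}{d}\bigr)\gtrsim\sqrt{\binom{k}{d}\log|\mathcal{S}_m|}$, which in terms of $\gamma$ carries an extra factor $\binom{k}{d}/\bigl(\binom{k}{d}-\binom{m}{d}\bigr)$ relative to the claimed threshold. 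That factor is unbounded precisely in regimes the theorem must cover: for constant $d$ and $m=k-1$ (needed for exact recovery, $k'=k$) one has $\binom{k}{d}-\binom{k-1}{d}=\binom{k-1}{d-1}$, so the factor equals $k/d$ and your condition degenerates to $\gamma\gtrsim (\sqrt{k}/d)\sqrt{1+\rate{k}}\to\infty$. The missing idea is the cancellation of the \emph{shared} noise: the paper fixes the common node set $F$, subtracts $S(\hat{\bx}\circ\bx_F)$ from both sides, and compares $S^{-F}(\hat{\bx})\sim\Gaussian(0,D(k'))$ with $S^{-F}(\bx)\sim\Gaussian(D(k')\beta,D(k'))$, where $D(k')=\binom{k}{d}-\binom{k'}{d}$. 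The variance now matches the gap, and \Cref{le:binom_inequality} (namely $\tfrac{(k-k')\binom{k}{d}}{(\binom{k}{d}-\binom{k'}{d})k}\le 1$) removes the $d$-dependence uniformly in $m$ --- which also dissolves your worry about whether $m\mapsto\gamma_m$ is maximised at $m=k'-1$.

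A secondary consequence: because $S^{-F}(\bx)$ depends on $F$, the paper must union-bound the planted-solution tail over the $\binom{k}{k'}$ choices of $F$ as well, and this two-level union bound (over $F$, then over the $Q(k')=\binom{p-k}{k-k'}$ competitors sharing exactly $F$) is exactly what produces the $\rate{k}-\rate{k-k'}$ term inside the second radical of $\gamma^{(k')}_{\UB}$. In your version the planted solution contributes a single Gaussian tail, so the second square root would contain only the slack $\epsilon$ (reparametrised as $\rate{k'}$ to absorb the outer sum over $m$); even apart from the divergence above, your derivation would therefore not reproduce the stated formula.
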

It follows easily from the latter theorem, the following on exact recovery.
\begin{lemma}
\label{le:exact_recovery}
Exact recovery is achieved for $\gamma > \gamma_{\UB}$, with 
$$\gamma_{\UB} = \sqrt{1+2\alpha_k} + \sqrt{2\alpha_k}.$$
Recovery of a constant fraction of $k$ nodes is achieved with $k' = \lambda k$, with $\lambda$ constant, for $\gamma > \gamma_{\UB}^{(\lambda k)}$ with 
$$\gamma_{\UB}^{(\lambda k)} = 1 + \sqrt{\alpha_k}.$$
\end{lemma}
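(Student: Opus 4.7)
The plan is to derive both parts as direct specializations of \Cref{th:UB} by evaluating the three rates $\rate{k}$, $\rate{k'}$, $\rate{k-k'}$ in the threshold formula
\[
\gamma^{(k')}_{\UB} = \sqrt{1+\rate{k}-2\rate{k-k'}+\rate{k'}} + \sqrt{\rate{k}-\rate{k-k'}+\rate{k'}}
\]
at the two relevant choices of $k'$, and then appealing to \Cref{th:UB} in both regimes.

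For exact recovery I would set $k'=k$, so $\rate{k'}=\rate{k}$ and $k-k'=0$. The term $\rate{0}$ is not literally covered by \Cref{eq:rate} since $\log 0$ diverges, so I would invoke the natural convention $\rate{0}=0$. This is consistent with the origin of this factor inside the proof of \Cref{th:UB}: the rate $\rate{k-k'}$ tracks the logarithm of a combinatorial quantity (of the form $\binom{k}{k-k'}$) which equals $\binom{k}{0}=1$ when $k'=k$, and therefore genuinely has rate $0$. With this substitution, the two radicals collapse to $\sqrt{1+2\rate{k}}$ and $\sqrt{2\rate{k}}$ respectively, matching the stated $\gamma_{\UB}$; \Cref{th:UB} applied at $k'=k$ then concludes.

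For $(\lambda k)$-partial recovery with $\lambda\in(0,1)$ constant, neither $k'=\lambda k$ nor $k-k'=(1-\lambda)k$ is degenerate: both differ from $k$ only by a multiplicative constant, hence by \Cref{eq:rate},
\[
\rate{\lambda k} = \lim_{p\to+\infty}\frac{\log k + \log\lambda}{\log p} = \rate{k},
\]
and likewise $\rate{(1-\lambda)k}=\rate{k}$. Substituting $\rate{k'}=\rate{k-k'}=\rate{k}$ yields the cancellations $1+\rate{k}-2\rate{k}+\rate{k}=1$ and $\rate{k}-\rate{k}+\rate{k}=\rate{k}$ inside the two radicals, so $\gamma_{\UB}^{(\lambda k)}=1+\sqrt{\rate{k}}$, and \Cref{th:UB} again closes the argument.

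The hard part is essentially nonexistent here: the lemma is a clean algebraic corollary of \Cref{th:UB} and requires only symbolic substitution together with the elementary observation that multiplicative constants do not affect rates. The only mildly subtle point is the boundary case $k'=k$, where the rate formula is not directly well-defined at $k-k'=0$; this is resolved either by adopting the convention $\rate{0}=0$ or, equivalently, by revisiting the corresponding line of the proof of \Cref{th:UB} to verify that the offending combinatorial term equals $1$ and hence contributes rate $0$.
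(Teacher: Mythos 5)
Your proposal is correct and is exactly the argument the paper intends: the lemma is stated to ``follow easily'' from \Cref{th:UB}, with no separate proof given, and the intended derivation is precisely the substitution $k'=k$ (resp.\ $k'=\lambda k$) into $\gamma^{(k')}_{\UB}$ using $\rate{\lambda k}=\rate{(1-\lambda)k}=\rate{k}$. Your handling of the boundary case is also sound: the convention $\rate{0}=0$ is the right reading because the binding term in the union bound of \Cref{eq:UB:max-based} is $m=k-1$, for which $\rate{k-m}=\rate{1}=0$.
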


\subsection{Lower bounds (impossibility of recovery)}
We here provide a characterisation of the regime where recovery is impossible, given by the following two theorems, valid in different regimes.

\begin{theorem}
\label{th:LB:max-Gaussians}
For $\rate{k} \in (0,1)$, and for $d \in \omega(1)$ the following holds. $$\lim_{p\to +\infty} P_r = 0$$ for any $\gamma < \gamma_{\LB,\G}$, with $\gamma_{\LB,\G}$ given by:
\begin{align}
\label{eq:lambda-LB-cor}
\gamma_{\LB,\G} \coloneqq 
\begin{cases}
\sqrt{1 - \alpha_k} & \text{ for } d \in o(\sqrt{k})
\\
\sqrt{1 - \alpha_k}/\sqrt{e} & \text{ otherwise }
\end{cases} 
\end{align}
\end{theorem}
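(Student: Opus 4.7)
The plan is to invoke the lower bound \eqref{eq:LB:max-based} of Lemma~\ref{le:fail-rec-union} with $m=0$ and to show that the maximum weight among the $k$-subsets disjoint from the planted $\bx$ typically overshoots $S(\bx)$ whenever $\gamma<\gamma_{\LB,\G}$. For $\hat\bx\in\mathcal{S}_0$ the random variable $S(\hat\bx)=\<\bY,\hat\bx^{\otimes d}\>$ is Gaussian with zero mean and variance $\binom{k}{d}$, while $S(\bx)\sim\mathcal{N}(\beta\binom{k}{d},\binom{k}{d})$ concentrates sharply around $\beta\binom{k}{d}$. The essential Gaussian structure is the covariance $\Cov(S(\hat\bx),S(\hat\bx'))=\binom{|\hat\bx\cap\hat\bx'|}{d}$, which vanishes whenever the two candidates share fewer than $d$ nodes and in general yields a normalized pairwise correlation $\binom{|\hat\bx\cap\hat\bx'|}{d}/\binom{k}{d}$.

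The second step is to extract a sub-collection $\mathcal{N}\subseteq\mathcal{S}_0$ acting as a ``covering code'' in which every pair has intersection at most $\delta k$ for some $\delta<1$. Because two independent uniform $k$-subsets of $[p]\setminus\bx$ have expected intersection of order $k^2/p=o(k)$ when $\alpha_k<1$, a second-moment/random-coding argument delivers such an $\mathcal{N}$ with $\log|\mathcal{N}|=(1-o(1))\log\binom{p-k}{k}=(1-o(1))\,k(1-\alpha_k)\log p$, while by construction its pairwise correlations are bounded by $\rho\leq\binom{\delta k}{d}/\binom{k}{d}$.

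Equipped with $\mathcal{N}$, I would then invoke a recent tail bound for maxima of Gaussians with bounded pairwise correlation, of the form $\max_{\hat\bx\in\mathcal{N}}S(\hat\bx)\geq \sqrt{2(1-\rho)\binom{k}{d}\log|\mathcal{N}|}\,(1-o(1))$ with probability $1-o(1)$, and compare it to $\beta\binom{k}{d}$; after substituting \eqref{eq:rescaling} this gives failure whenever $\gamma<\sqrt{(1-\rho)(1-\alpha_k)}\,(1-o(1))$. The two regimes of \eqref{eq:lambda-LB-cor} are governed by how small one can drive $\rho$. For $d\in o(\sqrt{k})$, Stirling yields $\binom{\delta k}{d}/\binom{k}{d}=\delta^d(1+o(1))\to 0$ for every fixed $\delta<1$ once $d\to\infty$, so $\rho\to 0$ is for free and one recovers $\gamma_{\LB,\G}=\sqrt{1-\alpha_k}$. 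For $d\in\Omega(\sqrt{k})$ the same Stirling expansion carries an $\exp(d^2/(2k))$-type correction, giving $\binom{\delta k}{d}/\binom{k}{d}\sim\delta^d\,e^{(1-\delta^2)d^2/(2k)}$; forcing this to vanish requires $\delta$ so close to $1$ that an extra factor $1/\sqrt{e}$ is unavoidably lost, matching the second line of \eqref{eq:lambda-LB-cor}.

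The main obstacle is the third step: producing $\mathcal{N}$ large enough that $\log|\mathcal{N}|$ matches $\log\binom{p-k}{k}$ up to a $(1-o(1))$ factor, while keeping pairwise intersections uniformly below $\delta k$ so that the correlation-controlled Gaussian tail bound applies. A naive disjoint packing yields only $|\mathcal{N}|\approx p/k$, which is off by a factor $k$ in the exponent, so one must exploit that typical intersections of random $k$-subsets are $o(k)$ and prune by a careful concentration argument. Equally delicate is the fine Stirling analysis of the ratio $\binom{\delta k}{d}/\binom{k}{d}$ in the regime $d=\Omega(\sqrt{k})$, which is precisely what produces the $1/\sqrt{e}$ gap and the bifurcation recorded in \eqref{eq:lambda-LB-cor}.
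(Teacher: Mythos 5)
Your proposal follows the same architecture as the paper's proof: lower-bound the failure probability via \Cref{le:fail-rec-union} by the event that the maximum of $S(\hat\bx)$ over a packing of weakly-overlapping candidates exceeds $S(\bx)$, compute the pairwise correlation $\binom{r}{d}/\binom{k}{d}$ for candidates sharing $r$ nodes, apply the Lopes-type lower tail for the maximum of correlated Gaussians, and bound the packing size by roughly $\binom{p}{k}$ divided by the number of solutions overlapping a given one in at least $\delta k$ nodes. The differences are minor: you restrict to $\mathcal{S}_0$ rather than a coverage of all of $\mathcal{C}_{p,k}$ (a clean choice, since all candidates are then exactly zero-mean), and you build the packing by random coding plus alteration rather than greedily. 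One quantitative caveat: for a \emph{fixed} $\delta<1$ either construction only yields $\log|\mathcal{N}|\gtrsim \delta\, k(1-\alpha_k)\log p$, not $(1-o(1))\log\binom{p-k}{k}$ as you assert; the stated threshold is recovered by letting $\delta\uparrow 1$ at the end, exactly as the paper lets its overlap parameter $\lambda$ tend to its allowed supremum.

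The genuine gap is in your treatment of the regime $d\in\Omega(\sqrt{k})$. Your Stirling expansion has the wrong sign: writing $\binom{\delta k}{d}/\binom{k}{d}=\prod_{i=0}^{d-1}\frac{\delta k-i}{k-i}$ and noting that $\frac{\delta k-i}{k-i}\le\delta$ for every $i\ge 0$, the ratio is at most $\delta^d$ with a further \emph{negative} correction of order $\exp\left(-(1-\delta)d^2/(2\delta k)\right)$, not a positive correction of order $\exp\left(+d^2/(2k)\right)$. Consequently your proposed mechanism for the $1/\sqrt{e}$ factor (``$\delta$ must be taken so close to $1$ that a factor $1/\sqrt{e}$ is lost'') does not hold up. In the paper the factor arises for the opposite reason: there the cruder estimate $\binom{\delta k}{d}/\binom{k}{d}\le(e\delta)^d$ is used in that regime, which forces $\delta<1/e$, hence a packing exponent of only $(1-\alpha_k)k/e$ and a threshold $\sqrt{(1-\alpha_k)/e}$. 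As written, your second-case derivation is not a proof; you would need either to reproduce the $(e\delta)^d$ bound and accept the restriction $\delta\le 1/e$, or to carry out the exact product estimate above --- which in fact shows $\rho\le\delta^d\to 0$ for every fixed $\delta<1$ whenever $d\to\infty$, and would therefore establish the threshold $\sqrt{1-\alpha_k}$ in both regimes. Either way, the bifurcation in \Cref{eq:lambda-LB-cor} does not come from where you say it does.
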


\begin{theorem}
\label{th:infoTheoretic}
For $\rate{k} \in (0,1)$ and any $d$, the following holds. $$\limsup\limits_{p\to +\infty} P_r < 1$$ for any $\gamma < \gamma_{\LB,\F}$, with $\gamma_{\LB,\F}$ given by:
\begin{align}
\gamma_{\LB,\F} \coloneqq \sqrt{\frac{1 - \rate{k}}{2}}
\end{align}
\end{theorem}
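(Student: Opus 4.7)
The strategy is to invoke Fano's inequality with $\bx$ drawn uniformly over $\mathcal{C}_{p,k}$, which matches the prior implicit in the problem setup. Under a uniform prior the MLE coincides with the MAP estimator, so any lower bound on the Bayes error probability transfers directly to a lower bound on $1-P_r$. The argument is a standard application of the mutual-information method and proceeds in four steps.

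First, I would write down the classical Fano bound
\[
1 - P_r \,\geq\, 1 - \frac{I(\bx;\bY) + \log 2}{\log |\mathcal{C}_{p,k}|},
\]
which holds for any estimator, hence in particular for the MLE. Second, using convexity of KL divergence, I would bound the mutual information by the averaged pairwise Kullback--Leibler divergence: if $\bx'$ is an independent copy of $\bx$, then $I(\bx;\bY) \leq \E_{\bx,\bx'}\!\left[D(P_{\bY|\bx}\,\|\,P_{\bY|\bx'})\right]$.

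Third, I would compute the KL divergence in closed form. Since $\bY\mid\bx$ is a product of unit-variance Gaussians with means $\beta\,x_{i_1}\cdots x_{i_d}$ indexed by strictly increasing $d$-tuples,
\[
D(P_{\bY|\bx}\,\|\,P_{\bY|\bx'}) = \frac{\beta^2}{2}\!\!\sum_{i_1<\cdots<i_d}\!\!(x_{i_1}\cdots x_{i_d}-x'_{i_1}\cdots x'_{i_d})^2 = \beta^2\!\left[\binom{k}{d} - \binom{\langle\bx,\bx'\rangle}{d}\right],
\]
where the combinatorial identity comes from counting the $d$-subsets lying in $\bx$ but not in $\bx'$ (and vice versa), namely the $d$-subsets of the symmetric difference of the two index sets. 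Dropping the non-negative $\binom{m}{d}$ correction yields the uniform bound $I(\bx;\bY) \leq \beta^2\binom{k}{d}$.

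Fourth, I would plug in the rescaling $\beta^2\binom{k}{d} = 2\gamma^2\,k\log p$ from \Cref{eq:rescaling} together with the elementary asymptotic $\log\binom{p}{k} = (1-\rate{k}+o(1))\,k\log p$, valid whenever $\rate{k}\in(0,1)$. The Fano bound then reads
\[
1 - P_r \,\geq\, 1 - \frac{2\gamma^2}{1-\rate{k}}\,(1+o(1)),
\]
which is bounded below by a positive constant precisely when $\gamma^2 < (1-\rate{k})/2$, i.e.\ $\gamma < \gamma_{\LB,\F}$. Consequently $\limsup_{p\to\infty} P_r < 1$.

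\textbf{Main obstacle.} No step is particularly delicate and, in contrast to the argument for \Cref{th:LB:max-Gaussians}, no careful packing construction is required: the full configuration space $\mathcal{C}_{p,k}$ already achieves the correct rate $(1-\rate{k})k\log p$. The only non-routine computation is the combinatorial identification of the KL divergence with $\beta^2[\binom{k}{d}-\binom{m}{d}]$; the remainder is bookkeeping in the rate notation. One could in principle sharpen the bound by retaining the correction $\E\binom{\langle\bx,\bx'\rangle}{d} = \binom{k}{d}^2/\binom{p}{d}$ (which is $o(\binom{k}{d})$ for $\rate{k}<1$), but this refinement is not needed to reach the stated threshold.
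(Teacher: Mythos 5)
Your proof is correct, and it takes a cleaner route than the paper's. The paper first builds a packing $\mathcal{C}(r)$ of solutions with pairwise overlap below $r=\lambda k$ (via \Cref{le:R-lb}), projects the MLE onto that set, applies Fano over $\mathcal{C}(r)$, and then recovers the constant by letting $\lambda\to 1$; you instead apply Fano directly with $\bx$ uniform over all of $\mathcal{C}_{p,k}$. Both arguments then use the identical mutual-information bound: the paper's ``generalized Fano'' step from Verd\'u is exactly your convexity bound $I(\bx;\bY)\le \E_{\bx,\bx'}[D(P_{\bY|\bx}\Vert P_{\bY|\bx'})]$, and its \Cref{le:Kullback} is your KL computation with the $\binom{\langle\bx,\bx'\rangle}{d}$ correction dropped, giving $I(\bx;\bY)\le\beta^2\binom{k}{d}$ in both cases. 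Since $\log\binom{p}{k}=(1-\rate{k}+o(1))\,k\log p$ already matches the rate $\rate{C(r)}\gtrsim \lambda k(1-\rate{k})$ that the packing achieves in the limit $\lambda\to 1$, the coverage construction buys nothing for this particular theorem (it is genuinely needed only in \Cref{th:LB:max-Gaussians}, where correlations must be controlled), and your version reaches the same threshold $\gamma_{\LB,\F}=\sqrt{(1-\rate{k})/2}$ with less machinery. Two cosmetic points: the quantity $2\bigl[\binom{k}{d}-\binom{m}{d}\bigr]$ counts the $d$-subsets contained in exactly one of the two supports, not the $d$-subsets of the symmetric difference as you phrase it (the formula you use is nevertheless the right one); and your parenthetical claim that one could sharpen via $\E\binom{\langle\bx,\bx'\rangle}{d}=\binom{k}{d}^2/\binom{p}{d}$ is correct but, as you note, unnecessary.
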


The first theorem gives tighter bounds, but its validity is confined to the case where the order of the hypergraph (or tensor) $d$ grows to infinity with $p$. The second theorem is valid in any regime. However, it provides only an impossibility result as $\limsup P_r < 1$, and achieves a lower threshold.
The proofs in the two regimes use two main arguments that are, respectively: (i) A recent tail bound on the maximum of \emph{dependent} Gaussians with bounded correlation (see \cite{lopes2018maximum} \ifdefined\LONG and \Cref{le:max-Gaussians}\else\fi) and (ii) the generalised Fano's inequality. Both proofs consider a coverage set of \emph{weakly overlapping} solutions defined below. Intuitively, recovery in the original problem is at least as difficult as the recovery restricted to this set of weakly dependent solutions if the coverage is \emph{sufficiently large}.

\begin{definition}
\label{def:cover}
For any $r \in \{0,\ldots,k\}$ define the coverage with overlap $r$ a subset $\mathcal{C}(r) \subset \mathcal{C}_{p,k}$ of solutions satisfying the following conditions:
(i) Any two solutions in $\mathcal{C}(r)$ share less than $r$ nodes.
(ii) For any solution $\bx' \in \mathcal{C}_{p,k}, \bx' \not \in \mathcal{C}(r)$ there exists a solution $\bx'' \in \mathcal{C}(r)$ such that $\bx'$ and $\bx''$ have at least $r$ nodes in common.
We denote by $C(r) \coloneqq |\mathcal{C}(r)|$ the cardinality of the coverage.
\end{definition}
	
In the proof of \Cref{th:LB:max-Gaussians}, we use the above mentioned result \cite{lopes2018maximum} to show that the maximum among solutions in $\mathcal{C}(r)$ concentrates tightly around $\sigma_k  \sqrt{\rate{C(r)}\cdot 2\log p}$ where $\sigma_k = \sqrt{\binom{k}{d}}$, while the planted solution concentrates around its expectation $\beta_k = \beta \binom{k}{d}$. Then, the condition for recovery translates into $\beta_k > \sigma_k  \sqrt{\rate{C(r)}\cdot 2\log p}$ which corresponds to $\gamma < \sqrt{\frac{\rate{C(r)}}{k}}$. A crucial point is that, in order to prove the concentration of the maximum over $\mathcal{C}(r)$, the overlap $r$ has to be small enough such that the maximum correlation between solutions weights $S(\hat{\bx})$ vanishes, which gives the conditions on $d$ in \Cref{th:LB:max-Gaussians} and the corresponding bound on $\rate{C(r)}$.

The proof of \Cref{th:infoTheoretic} is based on Fano's inequality. By restricting to the solutions in $\mathcal{C}(r)$, we use the generalized Fano's inequality \cite{verdu1994generalizing} to show that $P_r$ satisfies the bound $P_r \lesssim 1- \I(\bx;\bY) \cdot \log^{-1} C(r)$
where $\I(\bx;\bY)$ is  the mutual information between the planted solution $\bx$ and the observations $\bY(\bx)$. By using \cite{verdu1994generalizing} and the combinatorial structure of $\mathcal{C}_{p,k}$, we can upper bound the mutual information as $\binom{k}{d}\beta^2$. The rescaling of $\gamma$ and a lower bound on the cardinality of the coverage set $C(r)$ gives the claim. The proof of both theorems is given in \ifdefined\LONG the appendix\else the long version of the paper\fi.
\section{Computational Thresholds via Approximate Message Passing}
\label{sec:amp}
\begin{figure*}[ht]
\centering
\includegraphics[width=1.\textwidth]{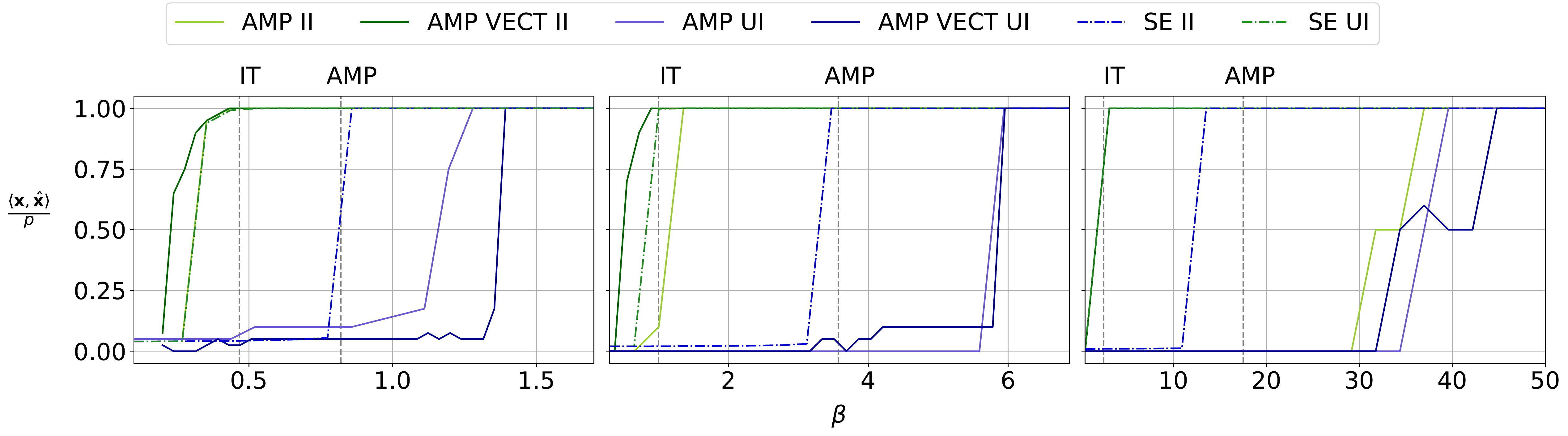}
\caption{Empirical performance of the AMP algorithm for fixed $p=500$, $d=3$ and different values of $k$ (respectively, from left to right, $k=20, 10, 5$). The experiment is repeated 20 times, and we report the median overlap achieved. The AMP and AMP VECT refer respectively to using a scalar and a vectorial thresholding function as in \Cref{eq:f_multidimensional}.}
\label{fig:amp_perf}
\end{figure*}
\begin{figure}[ht]
\centering
\includegraphics[width=0.48\textwidth]{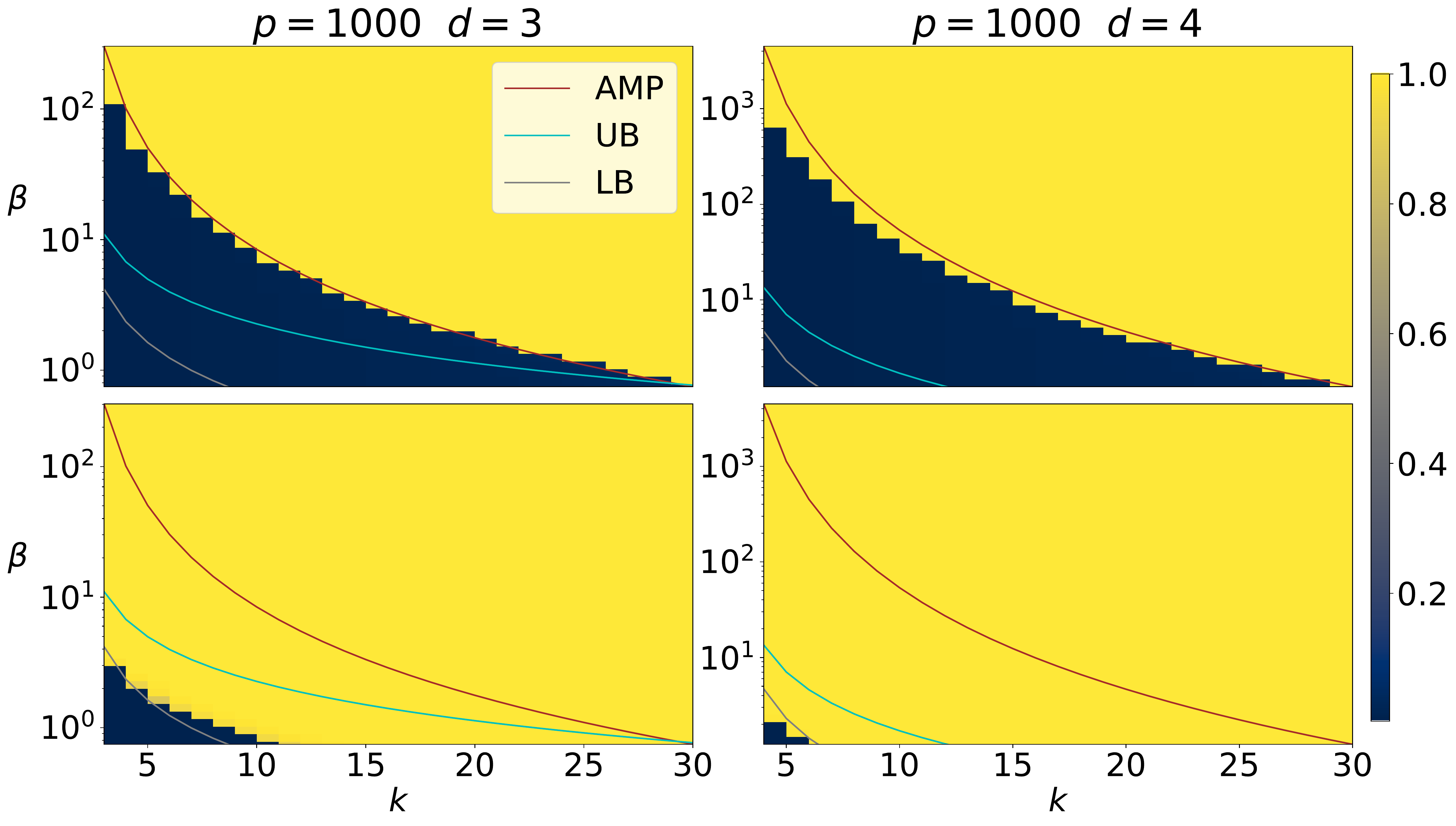}
\caption{State evolution fixed point overlap $m^*$ reached by the factorized equation in \Cref{eq:se_factorized}. The first row reports the overlap obtain with uninformative initialization (UI, $m_0 \approx 0$), while the bottom row reports the overlap with informative initialization $m_0 = 1$. The empirical fixed point with UI well matches the analytical AMP threshold given by \Cref{claim:amp_threshold} (brown line, expressed in terms of the $\beta$ parameter), while the II fixed point approaches the conjectured IT threshold located at the lower bound.}
\label{fig:state_evolution}
\end{figure}
Approximate message passing algorithms are a class of algorithms for high dimensional statistical estimation that approximate belief propagation in the large system limit \cite{donoho2009message}. Intuitively, it iteratively estimates the mean $\bm{x}^{(t)}$ and the variance $\bm{a}^{(t)}$ of the classic belief propagation messages in the factor graph of the estimation problem, discarding low order terms that depend on the target factor node of the messages. The algorithm results in a slight modification of a general spectral algorithm for tensor-PCA. In the following, we use the operator `$\circ$' to denote an operation that is performed elementwise on a vector or tensor.
Let introduce the following probability distribution 
\begin{equation*}
g(\bm{y} | \bm{a}, \bm{x}) = \frac{\P_p(\bm{y}) \exp(\bm{x} \cdot \bm{y} - (\bm{y} \circ \bm{y}) \cdot \bm{a}) / 2 }{Z(\bm{a}, \bm{x})}
\end{equation*} 
where $Z(\bm{a}, \bm{x})$ is the normalizing constant. Then, we define the following threshold function which will be used in the definition of AMP algorithm below: 
\begin{equation}
\label{eq:threshold_f_multidimensional}
f(\bm{a}, \bm{x}) = \mathbb{E}_{g(\cdot|\bm{a}, \bm{x})} [y].
\end{equation}

\subsection{The AMP procedure}
The iterative AMP procedure reads (derivation given in \ifdefined\LONG \Cref{app:amp}\else long version of the paper\fi):
\begin{align}
&\bm{b}^{(t)} = \beta^2 (d-1) \langle \bm{Y}^{\circ 2},  \bm{\sigma}^{(t)} \otimes (\bm{\hat{x}}^{(t)} \circ \bm{\hat{x}}^{(t-1)})^{\otimes d-2} \rangle \nonumber\\
&\bm{x}^{(t)} = \beta \mathbf{Y}\{\bm{\hat{x}}^{(t)}\} - \langle \bm{b}^{(t)}, \bm{\hat{x}}^{(t-1)}\rangle \nonumber\\
&\bm{a}^{(t)} = \beta^2 \bm{Y}^{\circ 2} \{\bm{(\hat{x}}^{(t)})^{\circ 2}\}
\label{eq:amp_algorithm} \\
&\bm{\sigma}^{(t+1)} = \bm{diag}(\bm{J}_x f(\bm{a}^{(t)}, \bm{\hat{x}}^{(t)})) \nonumber\\
&\bm{\hat{x}}^{(t+1)} = f(\bm{a}^{(t)}, \bm{x}^{(t)} ) \nonumber
\end{align}
where $\bm{J}_x f$ is the Jacobian w.r.t varible $x$ of the function $f$ and  $\bm{diag}(\cdot)$ extracts its diagonal entries. 
The threshold function $f$ becomes tractable when the prior distribution $\P_p$ \emph{factorizes} as $\P_p(\bm{x})=\prod_i p(x_i)$. In this case, the threshold function $f$ factorizes as well into independent components, and the AMP equations proposed here are equivalent to those in \cite{lesieur2017statistical} (more details in \ifdefined\LONG \Cref{app:amp}\else long version of the paper\fi). 
In our problem, however, the prior distribution does \emph{not} factorize since $\P_p$ is the uniform distribution over $\mathcal{C}_{p,k}$. A similar issue arises in the planted clique problem studied in \cite{deshpande2015finding}, where the authors propose to approximate the prior distribution with a factorized Bernulli distribution with parameter $\delta = k/p$. With this approximation, the threshold function reads $f(\bm{a}, \bm{x}) = (f_i(a_i, x_i) )_{i=1}^p$, where 
\begin{align*}
f_i(a_i, x_i) = \frac{1}{1 + \exp(-x_i + a_i/2 + \log(1/\delta - 1)}\ .
\end{align*}
While this Bernulli approximation is effective for large $k$ -- like the regime $k = \Theta(\sqrt p)$ studied in \cite{deshpande2015finding} -- it may be inaccurate for small $k$. We indeed observe experimentally (see \Cref{sec:experiments}) that for small values of $k$, the above approximation is no longer effective. Hence, we propose a finer approximation of the threshold function that is still tractable. Our parametrization is inspired by the independent Bernoulli approximation:
\begin{equation}
\label{eq:f_multidimensional}
f(\bm{a}, \bm{x}) = \left(\frac{1}{1 + \exp(-x_i + a_i/2 + \lambda(\bm{x})} \right)_{i=1}^p
\end{equation}
with $\lambda(\bm{x})$ being a scalar used to enforce $f$ to select $k$ components equals to $1$, hence given by $\sum_i f_i(\bm{a}, \bm{x}) = k$. The experiments described in \cref{sec:experiments} below show that our finer approximation outperforms the Bernulli i.i.d. approximation.  

\subsection{State Evolution}
The evolution of the approximated message passing algorithm in the special case of Bayesian-optimal inference, can be tracked by a one dimensional iterative equation of the overlap order parameter. In our setting, with a generic (non-factorizable) prior distribution $\P_p$ and threshold function $f$ we can define the multidimensional overlap order parameter as
$$\bm{m}^{(t)} = \frac{1}{\binom{p-1}{d-1}} \mathbb{E}_{\bm{x}}[\bm{1}\{\bm{x} \circ \bm{x}^{(t)}\}] $$
where $\bm{1}$ is the tensor with components $\bm{1}_{i,i_2,\dots,i_d} = 1$ if $i_2 < i_3 < \dots < i_d$ and $0$ otherwise. The multidimensional state evolution (SE) (generalizing \cite{lesieur2017statistical}) reads then (see \ifdefined\LONG \Cref{app:amp} \else the long version of the paper\fi for a heuristic derivation):
\begin{align*}
&\bm{m}^{(t+1)} = \frac{\E_{\bm{x},\bm{z}}\left[ \bm{1}\{\bm{x} \circ f\left(\bm{\hat{m}}^{(t)}, \bm{\hat{m}}^{(t)} \circ \bm{x} + (\bm{\hat{m}^{(t)}})^{\circ 1/2} \circ \bm{z} \right) \} \right]}{\binom{p-1}{d-1}}
\end{align*}
where $\bm{\hat{m}}^{(t)} = \beta^2 \binom{p-1}{d-1} \bm{m}^{(t)}$ and $\bm{z}$ is a $p$-dimensional vector with i.i.d. standard Gaussian entries. Note again that assuming a factorized i.i.d. prior distribution $\P_p(\bm{x}) = \prod_i p(x_i)$ the SE is equivalent to the single letter evolution of the \emph{scalar} overlap $m_t = \frac{1}{p} \langle \bm{x} , \bm{\hat x}^{(t)} \rangle$ described in \cite{lesieur2017statistical},
\begin{equation}
\label{eq:se_factorized}
m_{t+1} = \mathbb{E}_{x, z}\left[x \cdot f(\hat{m}_t, \hat{m}_t x + \sqrt{\hat{m}_t} z) \right],
\end{equation}
where $x \sim p$, $z \sim \Gaussian(0,1)$, and $\hat{m}_t = \beta^2 \binom{p-1}{d-1} m_t$. In the following we derive an analytical threshold for the AMP algorithm to succeed, approximating the true prior distribution with a factorized Bernulli prior with parameter $\delta=\frac{k}{p}$ and using the simplified SE in \Cref{eq:se_factorized}. A simple heuristic argument based on the study of the fixed points of the SE is given in\ifdefined\LONG \Cref{app:amp}\else the long version of the paper\fi.
\begin{claim}
\label{claim:amp_threshold}
The recovery threshold for the AMP algorithm reads:
\begin{equation}
\gamma_{\AMP} \coloneqq \sqrt{\frac{1}{2 e} \left(\frac{p}{k}\right)^{d-1} \frac{1}{d(d-1)\log p}}.
\end{equation}
\end{claim}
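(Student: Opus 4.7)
My plan is to derive $\gamma_{\AMP}$ heuristically from a saddle-node bifurcation of the scalar state evolution \eqref{eq:se_factorized} under the factorized Bernoulli approximation of the prior with parameter $\delta = k/p$. First, I would plug in the sigmoid form of the Bernoulli threshold function, $f(a,y) = \sigma(y - a/2 - L)$ with $L = \log\frac{1-\delta}{\delta}\approx \log(p/k)$, together with the tensor state-evolution scaling $\hat m_t = \beta^2 \binom{p-1}{d-1} m_t^{d-1}$. In the sparse regime $\delta\ll 1$, the critical $\hat m$ at the AMP transition will turn out to be of constant order while $L\to\infty$, so the sigmoid argument $\hat m/2 + \sqrt{\hat m}\,z - L$ is deeply negative with high probability and one may replace $\sigma(u)\approx e^u$. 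A direct Gaussian integration then yields
\begin{equation*}
\E_z\bigl[\sigma\bigl(\hat m/2 + \sqrt{\hat m}\,z - L\bigr)\bigr] \;\approx\; e^{-L}\,\E_z\bigl[e^{\hat m/2 + \sqrt{\hat m}\, z}\bigr] \;=\; \delta\, e^{\hat m},
\end{equation*}
so the SE collapses to the simple one-dimensional map
\begin{equation*}
F(m) \;=\; \delta^{2}\,\exp\!\bigl(\beta^{2}\,\tbinom{p-1}{d-1}\, m^{d-1}\bigr).
\end{equation*}

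Next, I would impose the tangency condition $F(m^\star) = m^\star$ together with $F'(m^\star) = 1$, which identifies the saddle-node at which the uninformative fixed point $m\approx \delta^2$ collides with the unstable barrier fixed point: below this threshold the SE is trapped near $\delta^2$, above it the iteration escapes toward the informative basin. Differentiating gives $F'(m) = F(m)\cdot(d-1)\hat m/m$, so at a fixed point the tangency reads $(d-1)\hat m^\star = 1$, i.e.,
\begin{equation*}
\hat m^\star = \frac{1}{d-1}, \qquad m^\star = \delta^{2}\, e^{1/(d-1)}, \qquad (m^\star)^{d-1} = \delta^{2(d-1)}\,e.
\end{equation*}
Inserting this into $\hat m^\star = \beta^{2}\binom{p-1}{d-1}(m^\star)^{d-1}$ isolates
\begin{equation*}
\beta_{\AMP}^{2} \;=\; \frac{1}{e\,(d-1)\,\binom{p-1}{d-1}\,\delta^{2(d-1)}}.
\end{equation*}
The factor $\tfrac{1}{e}$ in the final formula comes precisely from the extra exponential in $(m^\star)^{d-1} = \delta^{2(d-1)}\cdot e$.

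Finally, I would translate $\beta_{\AMP}^{2}$ back to $\gamma_{\AMP}^{2}$ via \eqref{eq:rescaling}, i.e.\ $\gamma^{2} = \beta^{2}\binom{k}{d}\big/(2k\log p)$, and plug in the standard asymptotics $\binom{p-1}{d-1}\sim p^{d-1}/(d-1)!$, $\binom{k}{d}\sim k^d/d!$ and $\delta = k/p$. A routine simplification then reproduces the claimed $\gamma_{\AMP}^{2} = (p/k)^{d-1}\big/\bigl(2e\,d(d-1)\log p\bigr)$.

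The main delicate point is justifying the linearization $\sigma(u)\approx e^u$ precisely at the tangency: one must verify a posteriori that $\hat m^\star = 1/(d-1) = O(1)$ is indeed negligible compared to $L = \Theta(\log p)$, which holds in the sparse regime $\delta\to 0$. A secondary concern is that the derivation is performed for the Bernoulli-approximated SE rather than the true multidimensional SE with the non-factorizable uniform prior on $\mathcal{C}_{p,k}$, so the resulting $\gamma_{\AMP}$ should be interpreted as the AMP threshold under this approximation---consistent with the heuristic nature of the claim.
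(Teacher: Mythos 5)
Your derivation is correct and follows essentially the same route as the paper: linearize the Bernoulli threshold function as $f(a,x)\approx \delta e^{x-a/2}$, integrate out the Gaussian to get the one-dimensional map $m\mapsto\delta^2 e^{\hat m}$, and locate the critical $\beta$ at the saddle-node bifurcation --- the paper merely asserts the resulting $\beta_{\AMP}$ as ``easy to see'', whereas you spell out the tangency conditions $F(m^\star)=m^\star$ and $F'(m^\star)=1$, which give $\hat m^\star=1/(d-1)$ and hence the same $\beta_{\AMP}^2=\bigl(e(d-1)\binom{p-1}{d-1}\delta^{2(d-1)}\bigr)^{-1}$ and the claimed $\gamma_{\AMP}$. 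The only point worth flagging is that you take $\hat m_t=\beta^2\binom{p-1}{d-1}m_t^{\,d-1}$ while \eqref{eq:se_factorized} as written has $\hat m_t=\beta^2\binom{p-1}{d-1}m_t$; your power of $d-1$ is the one consistent with the tensor structure and with the paper's own final formula for $\beta_{\AMP}$, so this is a notational slip in the paper's statement of the factorized state evolution rather than a gap in your argument.
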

\section{Experiments and Discussion}
\label{sec:experiments}
In \Cref{fig:amp_perf}, we report both the empirical performance of the AMP algorithm and the factorized SE fixed point (according to \Cref{eq:se_factorized}) for uninformative initialization (with a random overlap with the planted solution) and informative initialization (respectively UI and II) for different values of $k$. 
For a high value of $k$, the Bernoulli i.i.d. approximation of the prior function and factorized thresholding function well matches the SE fixed point, with the UI empirical performance slightly worse than the SE prediction due to finite-size effects (as already observed for the planted clique problem in graphs \cite{deshpande2015finding}). At smaller values of $k$, the statistical dependency between signal components increases and the AMP empirical performance heavily mismatch the SE prediction. In this setting, the proposed multivariate threshold function significantly outperforms the naive factorized AMP, matching the SE in the II setting correctly. Interestingly, the dynamic phase transition of the AMP indicated by the critical signal at which the SE with II fails approaches the conjectured information-theoretic threshold $\gamma_{\LB}$. This finding suggests a possibility of analysis of the IT thresholds with statistical physics-inspired techniques (e.g., interpolation methods \cite{barbier2020all}) that so far have never been applied rigorously to the case of structured priors in the sparse setting. In \Cref{fig:state_evolution} we also report the factorized SE fixed point for  $d=3$ and $d=4$ for both UI and II. We can observe the good agreement between the analytical computational threshold given by \Cref{claim:amp_threshold} and the SE's empirical fixed point. The dynamical phase transition indicated by the $m^*$ transition for II (bottom row) is close to the information-theoretic lower bound (as shown already for the tensor-PCA problem \cite{lesieur2017statistical}).

\ifdefined\LONG 
\section*{Acknowledgement}
This work was supported in part by
NSF Center on Science of Information
Grants CCF-0939370 and NSF Grants CCF-1524312, CCF-2006440, CCF-2007238.
\else \fi

\begin{appendices}
\section{Postponed proofs}
\begin{proof}[Proof of \Cref{le:fail-rec-union}]
By definition of $P_r^{(k')}$, $1-P_r^{(k')} = \P\left(\langle \bx_{\MLE} , \bx \rangle < k'  \right)$. By the characterization of the MLE in \Cref{eq:mle_estimator} and the definition of $\mathcal{S}_m$, the latter event is equal to 
$$\{S(\bx) > S(\hat{\bx}), \ \forall \ \hat{\bx} \in \mathcal{S}_0\cup \cdots \cup \mathcal S_{k'-1}\}.$$ The claim in \Cref{eq:UB:max-based} follows then from the union-bound on the probability
$$\P\left(\bigcup_{m=0}^{k'-1} \left\{S(\bx) \le \max_{\hat{\bx} \in \mathcal{S}_m}(S(\hat{\bx}))\right\}\right).$$
To prove the lower bound in \Cref{eq:LB:max-based}, we can observe that if $S(\bx) < \max_{\hat{\bx} \in \mathcal{S}_m}(S(\hat{\bx}))$ for some $m < k$, then the $\bx_{\MLE} \neq \bx$, and thus it fails to exactly recover the planted solution.
\end{proof}

\subsection{Proofs for the Lower Bound}
\begin{proof}[Proof of \Cref{th:UB}]
For the analysis, we define the following quantities depending on $k' \in \{0,\ldots, k-1\}$ (we consider $p, k, d$ to be the parameters of the problem, hence their dependency is not highlighted):
\begin{align*}
&Q(k') \coloneqq \binom{p-k}{k-{k'}} \\
&M(k') \coloneqq \binom{k}{k'}\binom{p-k}{k-{k'}} = \binom{k}{k-{k'}}\binom{p-k}{k-{k'}} \\
&D(k') \coloneqq \binom{k}{d} - \binom{k'}{d}.
\end{align*} 
For each fixed subset of $k'$ nodes of the planted solution, there are $Q(k')$ solutions that share exactly these $k'$ nodes with the planted solution. 
Moreover, there are exactly $M(k')$ solutions that share any $k'$ nodes with the planted solution. 
Each solution sharing $k'$ nodes with the planted solution differs in $D(k')$ edges with the latter.
We use the union bound given in \Cref{eq:UB:max-based}, and control the quantities
$$\P\left( \max_{\hat{\bx} \in \mathcal{S}_m} S(\hat{\bx}) > S(\bx) \right)$$ 
using the tail bounds of the Gaussian from \Cref{le:gaussian_tail} and spitting the inequality into the sum of two independent terms as in \Cref{le:inequility_t}. We hence get the following Lemma (full proof given below in this section): 
\begin{lemma}
\label{le:UB_epsilon}
For every $\epsilon > 0$ and for every $k' < k$ and $k > 1$, let $\gamma^{(k')}_{UB_{\epsilon}} \coloneqq \sqrt{\frac{\binom{k}{d}}{k D(k')}}\cdot UB_{\epsilon}(k')$, where 
\begin{align}
\label{eq:UB:gamma_k}
UB_{\epsilon}(k') \coloneqq \sqrt{\frac{\log M(k')}{\log p}+\epsilon}  + \sqrt{\frac{\log \binom{k}{k'}}{\log p}+\epsilon}  \ .	  
\end{align}
Then, for any $\gamma > \gamma^{({k'})}_{UB_{\epsilon}}$ it holds that
$$\P\left(\max_{\hat \bx \in \mathcal{S}_{k'}} S(\hat \bx) >  S(\bx) \right) \le \frac{1}{\sqrt{\pi} p^\epsilon}.
$$
\end{lemma}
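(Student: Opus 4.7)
The plan is to reduce the event under consideration to a pair of Gaussian tail estimates by decomposing the score difference $S(\hat\bx)-S(\bx)$ along the overlap pattern $T=\bx\cap\hat\bx$, and then to apply a two-level union bound that treats the choice of overlap and the completion of the solution separately.

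First I would decompose $S(\hat\bx)-S(\bx)$ by edge type. The hyperedges $e\in E(T):=E(\bx)\cap E(\hat\bx)$ cancel in the difference, and under \Cref{eq:weight-def} only edges in $E(\bx)$ carry the bias $\beta$, so
$$S(\hat\bx)-S(\bx)=-\beta D(k')+N_2(\hat\bx)-N_1(T),$$
where $N_1(T):=\sum_{e\in E(\bx)\setminus E(T)}Z_e$ and $N_2(\hat\bx):=\sum_{e\in E(\hat\bx)\setminus E(\bx)}Z_e$ are centered Gaussians of variance $D(k')$ supported on disjoint edge sets, hence independent. The key structural point is that $N_1(T)$ depends only on $T$, not on the $k-k'$ non-planted vertices that complete $\hat\bx$.

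Next, union-bounding over the $\binom{k}{k'}$ choices of $T\subset\bx$ and, inside each $T$, using the elementary inclusion (for any threshold $s>0$)
$$\left\{\max_{\hat\bx\supset T}(N_2(\hat\bx)-N_1(T))>\beta D(k')\right\}\subset\{N_1(T)<-s\}\cup\left\{\max_{\hat\bx\supset T}N_2(\hat\bx)>\beta D(k')-s\right\},$$
followed by a further union bound over the $Q(k')$ completions of $T$, I obtain
$$\P\!\left(\max_{\hat\bx\in\mathcal{S}_{k'}}S(\hat\bx)>S(\bx)\right)\le \binom{k}{k'}\,\P(N_1<-s)+M(k')\,\P(N_2>\beta D(k')-s).$$
I then set $s=u\sqrt{D(k')}$ with $u:=\sqrt{2(\log\binom{k}{k'}+\epsilon\log p)}$, so that $\binom{k}{k'}e^{-u^2/2}=p^{-\epsilon}$, and put $v:=\beta\sqrt{D(k')}-u$. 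The hypothesis $\gamma>\gamma^{(k')}_{UB_{\epsilon}}$, once unfolded via \Cref{eq:rescaling}, is exactly $\beta\sqrt{D(k')}>u+\sqrt{2(\log M(k')+\epsilon\log p)}$, which ensures $v^2\ge 2(\log M(k')+\epsilon\log p)$ and hence $M(k')e^{-v^2/2}\le p^{-\epsilon}$. Applying the Gaussian tail bound \Cref{le:gaussian_tail} to each term gives $p^{-\epsilon}/(u\sqrt{2\pi})$ and $p^{-\epsilon}/(v\sqrt{2\pi})$ respectively, which sum to at most $p^{-\epsilon}/\sqrt{\pi}$ whenever $u,v\ge\sqrt{2}$, which holds for $p$ large enough given any fixed $\epsilon>0$.

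The main obstacle is arranging the union bound so as to obtain the additive structure $UB_{\epsilon}(k')=\sqrt{\log M(k')/\log p+\epsilon}+\sqrt{\log\binom{k}{k'}/\log p+\epsilon}$ with two separate square-root terms. A naive union bound directly over the $M(k')$ solutions, treating $S(\hat\bx)-S(\bx)$ as $\mathcal{N}(-\beta D(k'),2D(k'))$, would introduce a spurious factor $2$ in the effective threshold; the separation of the shared component $N_1(T)$ (union-bounded only $\binom{k}{k'}$ times) from $N_2(\hat\bx)$ (union-bounded $M(k')$ times) is precisely what avoids this loss and yields the sharper critical value claimed in the lemma.
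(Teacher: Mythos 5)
Your proof is correct and follows essentially the same route as the paper's own argument: the union bound over the $\binom{k}{k'}$ overlap sets together with cancellation of the shared edges is exactly the paper's reduction to the quantities $S^{-F}$, your two-event split at the threshold $s$ is the paper's \Cref{le:inequility_t} combined with \Cref{le:existance_t}, and your choices of $u$ and $v$ coincide with the paper's $t_{\Delta''}$ and $t_{\Delta'}$. No substantive differences to report.
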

Plugging the result of the latter Lemma into \Cref{eq:UB:max-based} we get:
$$1-P_r^{(k')} \in  \mathcal{O}\left(\frac{k'}{p^{\epsilon}} \right)\ .$$
Using now \Cref{le:gamma-constants} to characterize further the bound $\gamma^{({k'})}_{UB_{\epsilon}}$, and reparametrazing $\epsilon = \rate{k'}+\tilde{\epsilon}$, with $\tilde{\epsilon} > 0$ an arbitrarily constant, we have 
$$\frac{k'}{p^\epsilon} = \frac{k'}{p^{\rate{k'}}} \cdot \frac{1}{p^{\tilde{\epsilon}}} = \exp\left(\log p \left[- \tilde{\epsilon} + \frac{\log k'}{\log p} - \rate{k'}\right]\right) \in o(1).$$
The asymptotics is due to the fact that $\frac{\log k'}{\log p} - \rate{k'} \rightarrow 0$, by definition of $\rate{k'}$ in \Cref{eq:rate}, and hence the expression $- \tilde{\epsilon} + \frac{\log k'}{\log p} - \rate{k'}$ is negative for sufficiently large $p$. 
Hence, the probability $1-P_r^{(k')}$ tends to $0$.
\end{proof}

\begin{proof}[Proof of \Cref{le:UB_epsilon}]
For an arbitrary subset $F$ of ${k'}$ nodes of the planted solution $\bx$, denoted by $\bx_F$, with ${k'}\in \{0,\ldots,k-1\}$, let $\mathcal S_{k'}^{F}$ be the set of all solutions that share exactly the set $F$ with the planted solution. Let $S^{-F}(\hat \bx)$ denote the sum of the weights in $\hat \bx$ but not in $\bx_F$, as 
$$S^{-F}(\hat \bx) = S(\hat{\bx}) - S(\hat{\bx} \circ \bx_F ).$$ 
By the union bound over the $\binom{k}{k'}$ possible fixed subsets $F$ of $k'$ nodes, we get 
\begin{align}
\P\left(\max_{\hat \bx \in \mathcal{S}_{k'}} S(\hat \bx) >  S(\bx)\right) &\le \nonumber\\ 
&\hspace{-1cm} \le \binom{k}{{k'}} \P\left(\max_{\hat \bx \in \mathcal{S}_{k'}^{F}} S(\hat \bx) > S(\bx)\right) \nonumber \\
\label{eq:simple_UB}
&\hspace{-1cm} = \binom{k}{{k'}} \Pr\left(\max_{\hat \bx \in \mathcal{S}_{k'}^{F}} S^{-F}(\hat \bx) > S^{-F}(\bx)\right)
\end{align}
where the equality follows from subtracting from both sides the quantity $S(\bx_F)$.
Using \Cref{le:inequility_t}, we have  
\begin{align}
\label{eq:simple_UB_two_terms}
\P\left(\max_{\hat \bx \in \mathcal{S}_{k'}^{F}} S^{-F}(\hat \bx) > S^{-F}(\bx)\right) &\le
\nonumber \\ &\hspace{-3cm} \le \P\left(\max_{\hat \bx \in \mathcal{S}_{k'}^{F}} S^{-F}(\hat \bx) > t \right) + \P\left(t \ge S^{-F}(\bx)\right) \nonumber \\ 
&\hspace{-3cm} \leq \P\left(\max_{\hat \bx \in \mathcal{S}_{k'}^{F}} S^{-F}(\hat \bx) > t_{\Delta'}\right) + \nonumber\\
& \hspace{-1cm} + \P\left(D(k')\beta - t_{\Delta''} > S^{-F}(\bx) \right) \nonumber \\
&\hspace{-3cm} \leq Q(k') p(k',\Delta') + p(k',\Delta'')
\end{align}
where the second inequality follows from  \Cref{le:existance_t} (together with the definition of $t_{\Delta'}$ and $t_{\Delta''}$) and the latter inequality follows from  \Cref{eq:UB-bound:planted} and \Cref{eq:LB-bound:m-intersecting} in \Cref{le:UB-concentration}. 
Combining \Cref{eq:simple_UB} and \Cref{eq:simple_UB_two_terms}, we get 
\begin{align*}
\P\left(\max_{\hat \bx \in \mathcal{S}_{k'}} S(\hat \bx) >  S(\bx)\right) &\le \binom{k}{k'} \left( Q(k') p(k',\Delta') + p(k',\Delta'')\right) \\
&\hspace{-3.5cm} =\binom{k}{k'} \frac{1}{\sqrt{4 \pi \log p}} \left( \left(\frac{1}{p}\right)^{\frac{\Delta'}{D(k')}} \frac{Q(k')}{\sqrt{\Delta'}} + \left(\frac{1}{p}\right)^{\frac{\Delta''}{D(k')}} \frac{1}{\sqrt{\Delta''}}\right) \\
&\hspace{-3.5cm} = \frac{1}{\sqrt{4 \pi \log p}} \Bigg(\left(\frac{1}{p}\right)^{\frac{\Delta'}{D(k')} - \frac{\log M(k')}{\log p}} \frac{1}{\sqrt{\Delta'}} + \\
&\hspace{-0.5cm} + \left(\frac{1}{p}\right)^{\frac{\Delta''}{D(k')}- \frac{\log \binom{k}{k'}}{\log p}} \frac{1}{\sqrt{\Delta''}}\Bigg)
\end{align*}
where in the first equality we used \Cref{eq:bound-generic:m-intersecting} and in the last equality the identity $z = p^{\frac{\log z}{\log p}}$ and the fact that $$\binom{k}{d} Q(k') = \binom{k}{d} \binom{p - k}{k-k'}= M(k').$$ 
Then, by \Cref{le:existance_t}, we get
\begin{align*}
\P\left(\max_{\hat \bx \in \mathcal{S}_{k'}} S(\hat \bx) >  S(\bx)\right)  &\le \frac{1}{p^\epsilon} \frac{1}{\sqrt{4 \pi \log p}}\left(\frac{1}{\sqrt{\Delta'}}+ \frac{1}{\sqrt{\Delta''}}\right) 
\\& \hspace{-2cm} \le \frac{1}{p^\epsilon} \frac{1}{\sqrt{4 \pi \log p}} \frac{1}{D(k')} \Bigg(\sqrt{\frac{\log p}{\log M(k')}} + \\
&\hspace{2cm} + \sqrt{\frac{\log p}{\log\binom{k}{k'}}}\Bigg) \\
&\hspace{-2cm} \le \frac{1}{\sqrt{\pi} p^\epsilon}
\end{align*}
where in the last inequality we used $D(k') \geq 1$, $M(k') \geq \binom{k}{k'} \ge k$, and $\log k > 1$, that valid for $k > k'$ and $k > 1$. 
\end{proof}

\subsection{Proofs of the Lower Bound}
\begin{proof}[Proof of \Cref{th:LB:max-Gaussians}]
\label{sec:proof:UB}
The proof of this lower bound is based on the result on tail bounds on the \emph{maximum} of Gaussians with \emph{bounded} correlation \cite{lopes2018maximum} given for convenience in \Cref{le:max-Gaussians}. 
We here outline a road map of the proof: 
(i) Given an overlap $r \le k$ between two solutions $\bx', \bx''$, we define the correlation $\rho(r)$ as the correlation between the random variables $S(\bx')$ and $S(\bx'')$. We first analyze the conditions for vanishing correlation $\rho(r)\rightarrow 0$. 
(ii) Using the tail bound on correlated Gaussians in \Cref{le:max-Gaussians}, we show that vanishing correlation implies the concentration of the maximum of a given coverage of solutions $\mathcal{C}(r)$.
(iii) Using a lower bound on the rate of the cardinality of the coverage, we provide a respective lower bound of the recovery threshold.
We hence first give a Lemma that characterizes the condition to have vanishing correlations.

\begin{lemma}
\label{le:vanishing-corr} 
For any two solutions $\bx', \bx'' \in \mathcal{C}_{p,k}$, that share $r$ nodes as $\langle \bx', \bx'' \rangle = r$, the correlation of the weights $\rho \coloneqq \rho_{S(\bx'), S(\bx'')}$ reads
$$\rho = \frac{\binom{\share}{d}}{\binom{k}{d}}$$ 
and it vanishes in each of the following two regimes: 
\begin{enumerate}
\item For $d \in \omega(1)$ and $\share \leq \lambda k$ for any constant $\lambda$ satisfying
\begin{align}
\label{eq:linear-ell-cases}
\begin{cases}
\lambda < 1 & \text{ for } d \in o(\sqrt{k})
\\
\lambda <1/e & \text{ otherwise }
\end{cases}
\end{align}
\item For $d \in \mathcal{O}(1)$ and for any $\share \in o(k)$. 
\end{enumerate}
\end{lemma}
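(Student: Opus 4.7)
The plan is to first derive the closed form for $\rho$ from a covariance calculation, and then analyze the product expression $\binom{r}{d}/\binom{k}{d}=\prod_{j=0}^{d-1}(r-j)/(k-j)$ separately in the three sub-regimes using elementary estimates.

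First, I would observe that by the definition of $S$ in \Cref{eq:sum-all-weights} and the noise model \Cref{eq:weight-def}, under the noise component we have $S(\bx') = \beta\binom{k}{d} + \sum_{i_1<\cdots<i_d} Z_{i_1\cdots i_d}\, x'_{i_1}\cdots x'_{i_d}$, so both $S(\bx')$ and $S(\bx'')$ are sums of $\binom{k}{d}$ i.i.d.\ standard Gaussians (up to the common deterministic mean, which does not affect correlation). The common noise terms are exactly those hyperedges whose $d$ vertices lie inside $\bx'\cap\bx''$, and there are $\binom{r}{d}$ such hyperedges. Hence $\Var(S(\bx'))=\Var(S(\bx''))=\binom{k}{d}$ and $\Cov(S(\bx'),S(\bx''))=\binom{r}{d}$, which yields $\rho=\binom{r}{d}/\binom{k}{d}$.

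Next I would rewrite $\rho=\prod_{j=0}^{d-1}(r-j)/(k-j)$ and analyze the three cases separately. For regime (2) with $d\in\mathcal O(1)$ and $r\in o(k)$, each factor equals $r/k\cdot(1+O(1/r))/(1+O(1/k))=o(1)$, so the product of a bounded number of such factors tends to $0$. For regime (1) with $d\in o(\sqrt k)$ and $r\le \lambda k$, $\lambda<1$, I take logarithms and Taylor-expand: $\log\rho=\sum_{j=0}^{d-1}\log((r-j)/(k-j))=d\log(r/k)+\sum_{j=0}^{d-1}[\log(1-j/r)-\log(1-j/k)]$. The correction sum is $O(d^2/r)+O(d^2/k)$, which is $o(1)$ under the assumptions $d=o(\sqrt k)$ and $r=\Theta(k)$, so $\log\rho=d\log\lambda+o(1)\to-\infty$, and hence $\rho\to 0$.

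For the remaining sub-case of regime (1), $d\in\omega(1)$ arbitrary with $r\le\lambda k$ and $\lambda<1/e$, the Taylor expansion above is too crude because $d^2/k$ need not vanish. Here I would instead use the standard one-sided bound $\binom{r}{d}\le (er/d)^d$ and $\binom{k}{d}\ge(k/d)^d$, which gives $\rho\le(er/k)^d\le(e\lambda)^d$. Since $e\lambda<1$ and $d\to\infty$, this upper bound tends to $0$. I expect the main (mild) obstacle to be verifying that the Taylor-expansion regime and the crude-bound regime together cover the stated $\lambda$-conditions exactly as claimed: the $(e\lambda)^d$ bound loses a factor of $e$ relative to $\lambda<1$, and this is why the statement weakens the threshold from $\lambda<1$ to $\lambda<1/e$ outside the $d=o(\sqrt k)$ window. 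I would close by assembling the three sub-cases into the single statement of the lemma.
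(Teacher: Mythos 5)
Your proposal follows essentially the same route as the paper: the same identification of the $\binom{r}{d}$ hyperedges lying inside the intersection to get $\rho=\binom{r}{d}/\binom{k}{d}$, the same Stirling-type bound $\binom{r}{d}/\binom{k}{d}\le (er/k)^d\le(e\lambda)^d$ for the $\lambda<1/e$ sub-case, the same constant-$d$ argument for regime (2), and the same underlying $d^2/k\to 0$ estimate for the $d\in o(\sqrt k)$ sub-case (the paper phrases it as $k(k-1)\cdots(k-d+1)\ge(1-d^2/k)k^d$ together with $\binom{r}{d}\le r^d/d!$, rather than via logarithms). One minor caution: in your Taylor-expansion step you invoke $r=\Theta(k)$, which is not among the hypotheses; this is harmless only because the correction $\sum_{j}\log(1-j/r)$ is nonpositive and can simply be discarded (equivalently, bound the numerator by $r^d$ as the paper does), leaving only the $O(d^2/k)=o(1)$ term to control.
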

We can hence show in the following that, given a coverage with vanishing correlation $\rho(r)$, the maximum of such coverage in bounded from below in high probability.
\begin{lemma}
\label{le:max-concentration}
Given a number of shared nodes $\share$ and a coverage $\mathcal{C}(r)$ that satisfies \Cref{def:cover}, if the correlation between the weights of two solutions $\bx', \bx'' \in \mathcal{C}(r)$, $\rho(\share) = \rho_{S(\bx'), S(\bx'')} \to 0$ vanishes, then for any constant $\epsilon \in (0,1)$ the following upper bound on the maximum weight in the coverage holds:
\begin{align}
\label{eq:cor-max-concentration}
\lim_{p \to +\infty} \Pr\left(\max_{\hat{\bx} \in \mathcal{C}(r)} S(\hat{\bx}) \le (1- \epsilon) \cdot \sigma_k  \sqrt{\rate{C(r)}\cdot 2\log p}\right) = 0
\end{align}
where $\sigma_k = \sqrt{\binom{k}{d}}$.
\end{lemma}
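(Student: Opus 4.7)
The plan is to apply the Lopes--Lin--Mueller tail bound (\Cref{le:max-Gaussians}) for the maximum of weakly correlated Gaussians to the collection $\{S(\hat{\bx}) : \hat{\bx} \in \mathcal{C}(r)\}$. Under the planted model, each $S(\hat{\bx})$ is a Gaussian random variable with variance $\sigma_k^2 = \binom{k}{d}$ and mean $\E[S(\hat{\bx})] = \beta\binom{m}{d} \ge 0$, where $m = \langle \bx,\hat{\bx}\rangle < r$ by \Cref{def:cover}. The hypothesis of the lemma guarantees that for distinct $\hat{\bx}',\hat{\bx}''\in\mathcal{C}(r)$ the correlation $\rho_{S(\hat{\bx}'),S(\hat{\bx}'')}$ tends to zero as $p\to\infty$, which is precisely the regime in which \Cref{le:max-Gaussians} applies.

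First I would pass to centered versions $\widetilde{S}(\hat{\bx}) := S(\hat{\bx}) - \E[S(\hat{\bx})]$. Since all means are non-negative,
\begin{equation*}
\max_{\hat{\bx}\in\mathcal{C}(r)} S(\hat{\bx}) \ \ge\ \max_{\hat{\bx}\in\mathcal{C}(r)} \widetilde{S}(\hat{\bx}),
\end{equation*}
so it suffices to establish the desired lower bound for the centered maximum. The rescaled collection $\{\widetilde{S}(\hat{\bx})/\sigma_k : \hat{\bx}\in\mathcal{C}(r)\}$ consists of $C(r)$ standard Gaussians with uniformly vanishing pairwise correlations, and \Cref{le:max-Gaussians} then delivers the in-probability convergence $\max_{\hat{\bx}\in\mathcal{C}(r)} \widetilde{S}(\hat{\bx})/(\sigma_k\sqrt{2\log C(r)}) \to 1$. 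In particular, for every fixed $\epsilon\in(0,1)$,
\begin{equation*}
\lim_{p\to\infty}\Pr\!\left(\max_{\hat{\bx}\in\mathcal{C}(r)} \widetilde{S}(\hat{\bx}) \le (1-\tfrac{\epsilon}{2})\,\sigma_k\sqrt{2\log C(r)}\right) = 0.
\end{equation*}

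It remains to convert the normalization $\sqrt{2\log C(r)}$ into $\sqrt{2\rate{C(r)}\log p}$. By the definition of the rate in \Cref{eq:rate}, $\log C(r) = \rate{C(r)}\log p\cdot(1+o(1))$, so for $p$ large enough $(1-\tfrac{\epsilon}{2})\sqrt{2\log C(r)} \ge (1-\epsilon)\sqrt{2\rate{C(r)}\log p}$. Combining this comparison with the displayed concentration bound and with $\max S(\hat{\bx})\ge\max\widetilde{S}(\hat{\bx})$ yields the statement. The main obstacle is the careful invocation of \Cref{le:max-Gaussians}: Gaussian-max concentration around $\sigma\sqrt{2\log n}$ holds precisely because of the uniform vanishing-correlation hypothesis, which is supplied to us by assumption; beyond this qualitative requirement, no quantitative control on the decay rate of $\rho(r)$ is needed for the proof.
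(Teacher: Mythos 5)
Your proposal is correct and follows essentially the same route as the paper: invoke \Cref{le:max-Gaussians} with $N=C(r)$, use the vanishing-correlation hypothesis to choose the constants $\rho_0,\delta_0$, and then trade $\sqrt{2\log C(r)}$ for $\sqrt{2\rate{C(r)}\log p}$ via the definition of the rate. Your explicit centering step (using that the means $\beta\binom{m}{d}$ are non-negative so the uncentered maximum dominates the centered one) is a detail the paper's proof glosses over, and is a welcome addition; just note that \Cref{le:max-Gaussians} only supplies the lower-tail bound you actually use, not the full in-probability convergence of the normalized maximum to $1$ that you assert in passing.
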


We further provide a bound on the opposite direction for the weight of the planted solution, such that the two quantities can be well separated in high probability.

\begin{lemma}
\label{le:planted-concentration}
For any sequence $\Delta = \Omega\left(\frac{\binom{k}{d}}{\log p}\right)$, it holds that 
\begin{align*}
\lim_{p \to +\infty} \Pr_z\left(S(\bx) > \beta_{k} + \sqrt{\Delta \cdot 2\log p} \right) = 0 
\end{align*}
where $\beta_k = \binom{k}{d} \beta$ and $\bx$ is the planted solution.
\end{lemma}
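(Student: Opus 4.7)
The plan is to reduce the statement to a one-dimensional Gaussian tail estimate by exploiting the very simple structure of $S(\bx)$ when evaluated at the planted solution itself.

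First I would unpack $S(\bx)$ using \Cref{eq:sum-all-weights} together with the observation model \Cref{eq:weight-def}: each ordered hyperedge $(i_1<\cdots<i_d)$ whose indices all lie in the support of $\bx$ contributes $\beta+Z_{i_1\cdots i_d}$ to $S(\bx)$, while every other term is killed by the product $x_{i_1}\cdots x_{i_d}$. Since there are exactly $\binom{k}{d}$ such hyperedges and the underlying noise entries are i.i.d.\ standard Gaussians, this yields the clean additive decomposition
$$S(\bx)=\beta\binom{k}{d}+W,\qquad W\sim\mathcal{N}\!\bigl(0,\tbinom{k}{d}\bigr).$$

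With this in hand, the target event $\{S(\bx)>\beta_k+\sqrt{2\Delta\log p}\}$ coincides with the upper tail of a centered Gaussian of variance $\binom{k}{d}$ at level $\sqrt{2\Delta\log p}$. Standardizing via $t=\sqrt{2\Delta\log p/\binom{k}{d}}$ and invoking the classical Mill-ratio bound $\Pr(\mathcal{N}(0,1)>t)\le \tfrac{1}{t\sqrt{2\pi}}e^{-t^2/2}$ (i.e.\ \Cref{le:gaussian_tail}) gives
$$\Pr\!\bigl(S(\bx)>\beta_k+\sqrt{2\Delta\log p}\bigr)\;\le\;\frac{1}{t\sqrt{2\pi}}\,\exp\!\left(-\frac{\Delta\log p}{\binom{k}{d}}\right)\;=\;\frac{1}{t\sqrt{2\pi}}\,p^{-\Delta/\binom{k}{d}}.$$

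Finally I would show that, under the hypothesis $\Delta=\Omega(\binom{k}{d}/\log p)$, the right-hand side is $o(1)$: the exponent $\Delta/\binom{k}{d}$ controls the polynomial decay in $p$, while the prefactor $1/t$ contributes only a harmless inverse power of $\log p$, so the two combine to vanish as $p\to+\infty$. There is no real obstacle here; the only point requiring care is matching the $\binom{k}{d}$ scale appearing as the variance of $W$ with the one inside $\Delta$, which is precisely what motivates the specific form of the scaling hypothesis on $\Delta$ and not a hypothesis stated directly on $\sqrt{2\Delta\log p}$.
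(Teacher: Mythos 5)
Your proof is correct and follows the paper's argument exactly: identify $S(\bx)$ as a Gaussian with mean $\beta_k=\beta\binom{k}{d}$ and variance $\binom{k}{d}$, apply the upper tail bound of \Cref{le:gaussian_tail} at level $t_\Delta=\sqrt{2\Delta\log p}$, and arrive at the same bound $p^{-\Delta/\binom{k}{d}}\big/\bigl(t_\Delta\sqrt{2\pi}\bigr)$. One imprecision in your closing step: under the hypothesis $\Delta=\Omega\bigl(\binom{k}{d}/\log p\bigr)$ the exponent $\Delta/\binom{k}{d}$ may be as small as $c/\log p$, in which case $p^{-\Delta/\binom{k}{d}}=e^{-c}$ is only a bounded constant and gives no polynomial decay in $p$; in that borderline regime the bound vanishes because the prefactor $1/t_\Delta=1/\sqrt{2\Delta\log p}=O\bigl(1/\sqrt{\binom{k}{d}}\bigr)$ tends to zero (using that $\binom{k}{d}\to\infty$ in the regimes where the lemma is invoked), not because of the exponential factor, so the roles you assign to the two factors should be swapped in the worst case allowed by the hypothesis.
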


We can prove the main Lemma that connects the recovery threshold to the rate of any coverage $\rate{C(r)}$.

\begin{lemma}
\label{le:LB-R}
Given the assumptions of \Cref{le:max-concentration} and given that $\rate{C(r)} \in \Omega(1)$, for any $\epsilon > 0$ constant, if $\gamma \le (1-\epsilon) \sqrt{\frac{\rate{C(r)}}{k}}$, then the probability of recovery vanishes:
\begin{align*}
\lim_{p \to +\infty} P_r = 0 \ . 
\end{align*}
\end{lemma}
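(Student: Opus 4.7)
The plan is to produce, with probability tending to one, a non-planted solution whose weight strictly exceeds $S(\bx)$; since the MLE returns an argmax, this forces $\bx_{\MLE}\neq \bx$ and hence $P_r\to 0$. The competitor will be drawn from the coverage $\mathcal{C}(r)$: Lemma~\ref{le:max-concentration} supplies a lower bound on the maximum weight over the coverage, and Lemma~\ref{le:planted-concentration} supplies a matching upper bound on the weight of the planted signal. Combining the two via a union bound and checking that the gap implied by the rescaling \Cref{eq:rescaling} and the hypothesis on $\gamma$ is positive yields the claim.

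Concretely, I would first observe
\begin{align*}
1-P_r \;\geq\; \P\!\left(\max_{\hat{\bx}\in \mathcal{C}(r)\setminus\{\bx\}} S(\hat{\bx}) \,>\, S(\bx)\right),
\end{align*}
and then apply Lemma~\ref{le:max-concentration} to $\mathcal{C}(r)\setminus\{\bx\}$, whose cardinality differs from $C(r)$ by at most one and therefore preserves the rate $\rate{C(r)}$ (using that $\rate{C(r)}=\Omega(1)$ implies $C(r)\to\infty$). This yields $\max_{\hat{\bx}} S(\hat{\bx}) \geq (1-\epsilon_1)\sigma_k\sqrt{2\,\rate{C(r)}\log p}$ w.h.p., with $\sigma_k=\sqrt{\binom{k}{d}}$. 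Simultaneously, Lemma~\ref{le:planted-concentration} applied with $\Delta=\binom{k}{d}/\log p$ gives $S(\bx)\leq \beta_k + \sigma_k\sqrt{2}$ w.h.p., where $\beta_k=\binom{k}{d}\beta$; a union bound keeps both events valid together.

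It then suffices to establish the deterministic inequality $\beta_k + \sigma_k\sqrt{2} < (1-\epsilon_1)\,\sigma_k\sqrt{2\,\rate{C(r)}\log p}$. Substituting $\beta_k=\gamma\,\sigma_k\sqrt{2k\log p}$ (immediate from \Cref{eq:rescaling}) and dividing both sides by $\sigma_k\sqrt{2\log p}$ reduces the comparison to
\begin{align*}
\gamma\sqrt{k} + \frac{1}{\sqrt{\log p}} \;<\; (1-\epsilon_1)\sqrt{\rate{C(r)}}.
\end{align*}
Since $\rate{C(r)}=\Omega(1)$ the additive $1/\sqrt{\log p}$ term is asymptotically negligible, so choosing $\epsilon_1=\epsilon/2$ together with the hypothesis $\gamma\leq (1-\epsilon)\sqrt{\rate{C(r)}/k}$ yields a positive slack for all sufficiently large $p$.

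The main delicacy is pairing the two concentration lemmas on compatible scales rather than any single estimate: one must drop $\bx$ from the coverage without changing the rate (which requires $C(r)\to\infty$), and one must choose $\Delta$ large enough to satisfy the $\Omega(\binom{k}{d}/\log p)$ requirement of Lemma~\ref{le:planted-concentration} while keeping the resulting $\sqrt{\Delta\cdot 2\log p}$ fluctuation of $S(\bx)$ at order $\sigma_k$, so that it is dominated by the $\sigma_k\sqrt{\log p}$ scale of the coverage maximum. The canonical choice $\Delta=\binom{k}{d}/\log p$ meets both requirements, after which the argument reduces to the displayed inequality above.
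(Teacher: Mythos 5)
Your proposal is correct and follows essentially the same route as the paper's proof: bound $1-P_r$ by the probability that some coverage solution beats the planted one, split via a threshold $t$, apply Lemma~\ref{le:max-concentration} and Lemma~\ref{le:planted-concentration} to the two sides, and verify the deterministic gap (your displayed inequality is exactly the paper's condition \Cref{eq:condition_gamma} after dividing by $\sigma_k\sqrt{2\log p}$, with $\delta_0=1$ in place of an arbitrary constant). Your explicit removal of $\bx$ from $\mathcal{C}(r)$ is a slightly more careful handling of a point the paper glosses over, but does not change the argument.
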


\begin{proof}
From the hypothesis that $\rate{C(r)} \in \Omega(1)$, $\log p \to +\infty$, and given any constant $\delta_0 > 0$ the condition on $\gamma$ above implies that for $p$ large enough:
$$\gamma \le (1-\epsilon)   \sqrt{\frac{\rate{C(r)}}{k}} - \sqrt{\frac{\delta_0}{k\log p}} \ .$$
Using the definition of $\gamma$ and defining $\Delta = \delta_0 \cdot \frac{\binom{k}{d}}{\log p}$ we get with simple manipulation:
\begin{align}
\label{eq:condition_gamma}
(1-\epsilon) \sqrt{\binom{k}{d}}  \sqrt{\rate{C(r)}\cdot 2\log p} > \binom{k}{d}\beta + \sqrt{\Delta \cdot 2\log p}
\end{align}  
Using \Cref{le:inequility_t} and \Cref{eq:LB:max-based}, we can write an upper-bound on the recovery probability as: 
\begin{align*}
P_r &\le \Pr\left(\max_{\hat{\bx} \in \mathcal{C}(r)} S(\hat{\bx}) \le \plant \right) \\
& \le \Pr\left(\max_{\hat{\bx} \in \mathcal{C}(r)} S(\hat{\bx}) \le t\right) + \Pr\left(t < S(\bx) \right) . 
\end{align*}
We can now use the condition in \Cref{eq:condition_gamma}, to get a $t$ such that the conditions for both \Cref{le:max-concentration} and \Cref{le:planted-concentration} are satisfied. We hence obtain 
\begin{align}
\label{eq:max-and-planted-vs-t}
\Pr\left(\max_{\hat{\bx} \in \mathcal{C}(r)} S(\hat{\bx}) > t\right) \rightarrow 1 && \text{ and } && \Pr\left(S(\bx) \le t \right) \rightarrow 1
\end{align}
and so the claim follows.
\end{proof}

Consider any $\share = \lambda k$ with $\lambda$ being any constant satisfying \Cref{eq:linear-ell-cases}, according to the regime of $d$ and $k$. 
By \Cref{le:vanishing-corr} and the assumption of the theorem $d \in \omega(1)$, the correlation $\rho(\share)$ vanishes. We can hence bound the recovery threshold with a respective bound on the rate $\rate{C(r)}$, that is given in the following Lemma.

\begin{lemma}
\label{le:R-lb}
There exist a coverage $\mathcal{C}(r)$ according to \Cref{def:cover} with cardinality $\mathcal{C}(r)$, $C(r) \coloneqq |\mathcal{C}(r)|$ at least
\begin{align}
\label{eq:cover-ratio}
C(r) \gtrsim \frac{\binom{p}{k}}{B(r)} && \text{for} && B(r) = \sum_{l=\share}^k \binom{k}{l}\binom{p-k}{k - l}.
\end{align}

For any $\rate{k} \in (0,1)$ and $r \leq \lambda k$, with constant $\lambda \in (0,1)$, and for $r \in \omega(\frac{k}{\log n})$ it holds that 
$\rate{C(r)} \gtrsim \share(1-\rate{k})$.
\end{lemma}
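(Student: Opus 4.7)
The plan is to construct the coverage by a greedy maximal packing, and then estimate the rate of its cardinality via elementary bounds on binomial coefficients combined with the hypergeometric identity. Concretely, I would build $\mathcal{C}(r)$ iteratively: starting from $\mathcal{C}(r)=\emptyset$, as long as there exists some $\bx' \in \mathcal{C}_{p,k}$ sharing strictly fewer than $r$ nodes with every element already in $\mathcal{C}(r)$, add $\bx'$ to $\mathcal{C}(r)$. The resulting set trivially satisfies property (i) of \Cref{def:cover}, while (ii) holds because any $\bx' \notin \mathcal{C}(r)$ must share at least $r$ nodes with some element of $\mathcal{C}(r)$ (otherwise it would still be eligible for addition). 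For the cardinality, observe that for each fixed $\hat\bx \in \mathcal{C}_{p,k}$ the number of solutions sharing at least $r$ nodes with $\hat\bx$ equals exactly $B(r) = \sum_{l=r}^{k}\binom{k}{l}\binom{p-k}{k-l}$, and since these balls centered at the elements of $\mathcal{C}(r)$ cover $\mathcal{C}_{p,k}$, we obtain $\binom{p}{k} \leq C(r)\cdot B(r)$, proving the first claim.

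For the rate bound, I would first show that under the assumption $r = \omega(k/\log p)$ (together with $\alpha_k < 1$, which implies $r \gg k^2/p$), the sequence $a_l = \binom{k}{l}\binom{p-k}{k-l}$ is decreasing in $l$ throughout $[r,k]$, since the ratio $a_{l+1}/a_l = (k-l)^2/((l+1)(p-2k+l+1))$ is below $1$ whenever $l \gtrsim k^2/p$. This yields $B(r) \leq (k-r+1)\binom{k}{r}\binom{p-k}{k-r}$. Next, using the hypergeometric identity $\binom{p}{k}/(\binom{k}{r}\binom{p-k}{k-r}) = 1/\Pr(X = r)$ for $X \sim \mathrm{Hypergeometric}(p,k,k)$, combined with the binomial approximation $\Pr(X = r) \approx \binom{k}{r}(k/p)^r(1-k/p)^{k-r}$ (valid since $k = o(p)$), the $\binom{k}{r}$ factors cancel to give $\log(\binom{p}{k}/\binom{p-k}{k-r}) = r\log(p/k) + O(k^2/p)$.

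Combining the two estimates, $\log C(r) \geq r\log(p/k) + O(k^2/p) - \log\binom{k}{r} - O(\log k)$. The main obstacle, and precisely the reason for the hypothesis $r = \omega(k/\log p)$, is controlling the residual term $\log\binom{k}{r}$: since $k/r = o(\log p)$ forces $\log(k/r) = o(\log p)$, the elementary bound $\log\binom{k}{r} \leq r\log(ek/r)$ gives $\log\binom{k}{r} = o(r\log p)$; together with $r\log(p/k) = r(1-\alpha_k)\log p\,(1+o(1))$ and $k^2/p = o(r\log p)$, I conclude $\alpha_{C(r)} \gtrsim r(1-\alpha_k)$ as claimed.
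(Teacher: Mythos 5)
Your construction of $\mathcal{C}(r)$ and the counting argument $\binom{p}{k}\le C(r)\cdot B(r)$ are exactly the paper's greedy packing argument, so the first claim is handled identically. Where you diverge is in estimating $\alpha_{C(r)}$: the paper simply bounds $B(r)\le 2^k\binom{p}{k-r}(k-r)$ via $\binom{k}{l}\le 2^k$ and $\binom{p-k}{k-l}\le\binom{p}{k-r}$, then applies the standard log-binomial inequalities; the additive $k$ coming from the $2^k$ factor is what forces the hypothesis $r\in\omega(k/\log p)$. You instead prove that $a_l=\binom{k}{l}\binom{p-k}{k-l}$ is decreasing on $[r,k]$ (the ratio computation $(k-l)^2/((l+1)(p-2k+l+1))<1$ for $l\ge r\gg k^2/p$ is correct, and $r=\omega(k/\log p)$ together with $\alpha_k<1$ does give $r\gg k^2/p$), obtaining the sharper bound $B(r)\le(k-r+1)\binom{k}{r}\binom{p-k}{k-r}$, and then use the hypergeometric identity to evaluate $\log\bigl(\binom{p}{k}/\binom{p-k}{k-r}\bigr)$. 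This is a tighter bookkeeping: your residual error after the leading term $r\log(p/k)$ is $\log\binom{k}{r}\le r\log(ek/r)=o(r\log p)$, whereas the paper's residual is $k+O(\log k)$; both are killed by the same hypothesis $r\in\omega(k/\log p)$, but your route would survive with a weaker assumption on $r$ when $k/r$ grows slowly. One small inaccuracy: the error in replacing the hypergeometric mass by the binomial approximation is not $O(k^2/p)$ uniformly in $r$; a direct expansion of the factorials gives an additional $O(r^2/k)=O(r)$ term. Since $O(r)=o(r\log p)$ this does not affect your conclusion, but the intermediate claim as stated is slightly too strong. Overall the proof is correct.
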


We can thus apply \Cref{le:LB-R} and \Cref{le:R-lb} with $\share = \lambda k$ and obtain the desired result as follows: 
$$(1-\epsilon) \sqrt{\frac{\rate{C(r)}}{k}} \ge (1-\epsilon) \sqrt{\lambda(1-\rate{k})} > \gamma
$$
where the last inequality follows from the condition on $\gamma$ in \Cref{eq:lambda-LB-cor} and the hypothesis that $\lambda$ satisfies \Cref{eq:linear-ell-cases}. The above inequality and \Cref{le:LB-R} implies the claim.
\end{proof}

\begin{proof}[Proof of \Cref{le:vanishing-corr}]
Both $S(\bx')$ and $S(\bx'')$ are Gaussian random variables $\normal(\mu, \beta^2 \binom{k}{d})$, where $\mu$ is the mean, and depends on the amount of nodes that the solution shares with the planted one $\bx$. The correlation is hence 
$$\rho = \frac{\E[(S(\bx') - \E S(\bx'))(S(\bx'') - \E S(\bx''))]}{\beta^2 \binom{k}{d}}.$$
Observe that $S(\bx') - \E S(\bx')$ and $S(\bx'') - \E S(\bx'')$ are the sum of $\binom{k}{d} - \binom{r}{d}$ independent terms, and $\binom{r}{d}$ identical terms that are Gaussian distributed $\mathcal{N}(0, \beta^2)$. Hence, we get $\rho = \binom{r}{d}/\binom{k}{d}$. We can now upper bound the correlation as
\begin{align*}
k(k-1)\cdots (k-d+1) \geq (k - d)^d &= \left(1 - \frac{d}{k}\right)^d k^d \\ 
&\ge \left(1 - \frac{d^2}{k}\right) k^d
\end{align*}
and therefore 
\begin{align*}
\frac{\binom{\share}{d}}{\binom{k}{d}} = \frac{\share!}{d!(\share-d)!}\frac{d!(k-d)!}{\share !}
&=\frac{\share(\share-1)\cdots (\share-d+1)}{k(k-1)\cdots (k-d+1)} \\ 
&\le \left(\frac{\share}{k}\right)^d \frac{1}{\left(1 - \frac{d^2}{k}\right)} \\
&\leq 
\frac{\lambda^d }{\left(1 - \frac{d^2}{k}\right)} \ .
\end{align*}
For $d \in o(\sqrt{k})$ and $d \in \omega(1)$ the latter quantity converges to $0$ for any constant $\lambda<1$. As for the other case in \Cref{eq:linear-ell-cases}, we have 
\begin{align*}
\frac{\binom{\share}{d}}{\binom{k}{d}} & \leq \frac{e^d \share ^d}{d^d} \cdot  \frac{d^d}{k^d} = \left(\frac{e \cdot \share}{k}\right)^d \leq \left(e \lambda \right)^d \rightarrow 0
\end{align*}
where the asymptotics follows from $e < c_0$ and $d \rightarrow \infty$. As for the second case, since $d \in \Theta(1)$ we have $\binom{\share}{d} \in \Theta(\share^{d})$ and $\binom{k}{d} \in \Theta(k^{d})$,  and thus $\share \in o(k)$ implies $\binom{\share}{d} \in o(\binom{k}{d})$.    
\end{proof}

\begin{proof}[Proof of \Cref{le:max-concentration}]
We apply the tail bound to the maximum of correlated Gaussians given in \cite{lopes2018maximum} (given also in \Cref{le:max-Gaussians} for convenience)
with $N=C(r)$. In particular, since by assumption the correlation vanishes, we can choose \emph{any} $\rho_0 <1$ and, for sufficiently large $p$, satisfy $\delta_0 \sqrt{1- \rho_0} \ge 1- \epsilon$ and $\rho(\share)\leq \rho_0$. We can hence obtain:
\begin{align}
\label{eq:max-concentration}
\lim_{p \to +\infty} \Pr\left(\max_{\hat{\bx} \in \mathcal{C}(r)} S(\hat{\bx}) \le (1- \epsilon) \cdot \sigma_k  \sqrt{2 \log C(r)}\right) = 0\ .
\end{align}
Then, \Cref{eq:max-concentration} follows from \Cref{eq:max-Gaussians} as both $\eta$ and $\xi$ in \Cref{eq:max-Gaussians-constants} are constants and $C(r)=|\mathcal{C}(r)| \rightarrow \infty$. Finally, recall that $\rate{C(r)}= \lim_p \log C(r)/\log p$, and hence, for $p$ large enough, $\log C(r) \le \rate{C(r)} \log p (1+\epsilon)$. The claim follows from the arbitrariness of $\epsilon$.
\end{proof}

\begin{proof}[Proof of \Cref{le:planted-concentration}]
Simply observe that, fixing $\bx$ as the planted solution, $S(\bx) \sim \Gaussian(\binom{k}{d} \beta,{\binom{k}{d}})$. 
Hence, by applying \Cref{le:gaussian_tail} with $\mu = \binom{k}{d} \beta = \beta_k$, $\sigma^2 = \binom{k}{d}$, and $c= t_\Delta =  \sqrt{\Delta 2\log n}$ we have 
\begin{align*}
\Pr_z\left(S(\bx) > \beta_k + t_\Delta \right) &\le \frac{1}{t_\Delta} \cdot \frac{e^{-t^2_\Delta/2\binom{k}{d}}}{\sqrt{2\pi}} 
\nonumber \\
&= \frac{e^{-(\Delta \log p)/\binom{k}{d}}}{t_\Delta \sqrt{2\pi}} \nonumber \\ 
&= \frac{p^{-\Delta/\binom{k}{d}}}{\sqrt{2\pi \Delta} \cdot \sqrt{2 \log p}}  \ ,
\end{align*}
and the latter quantity goes to $0$ for $\Delta = \Omega\left(\frac{\binom{k}{d}}{\log p}\right)$.
\end{proof}

\begin{proof}[Proof of \Cref{le:R-lb}]
We construct $\mathcal{C}(r)$ by a iterative greedy procedure. 
Starting from an arbitrary $\hat{\bx} \in \mathcal{C}_{p,k}$, include $\hat{\bx}$ into $\mathcal{C}(r)$ and remove all solutions in $\mathcal{C}_{p,k}$ with overlap with the planted solution $\langle \bx, \hat{\bx} \rangle \ge r$. Iterate this step with the remaining solutions in $\mathcal{C}_{p,k}$ not considered before, until there are none with this property. At every step we include one new solution in $\mathcal{C}(r)$ we remove at most $B(r)$ solutions from $\mathcal{C}_{p,k}$. Hence, in total we can collect in $\mathcal{C}(r)$ at least $\lceil|\mathcal{C}_{p,k}|/B(r)\rceil = \lceil\binom{p}{k}/B(r) \rceil$ many solutions. Note that this construction satisfies \Cref{def:cover} as (i) holds by construction, and (ii) follows from the fact that if $\hat{\bx} \not \in \mathcal{C}(r)$, then we can perform another greedy step of the procedure and included it. Hence this proves \Cref{eq:cover-ratio}.
To prove the second part of the lemma, note that $\binom{k}{d} \leq 2^k$ and $\binom{p-k}{k-l}\leq \binom{p}{k - r}$, hence we can get the following upper bound on $B(r)$.
\begin{equation*}
B(r) \leq 2^k \binom{p}{k - r}(k - r) \ .
\end{equation*}
Plugging the latter in \Cref{eq:cover-ratio} we get:
\begin{align*}
C(r) \ge \log \frac{\binom{p}{k}}{B(r)} &\geq \log \binom{p}{k} + \\
&\hspace{0.8cm} -\left[ k +\log \binom{p}{k - r}  + \log (k - r)\right]. 
\end{align*}
Using the standard concentration inequalities on the binomial coefficients in \Cref{le:log_binomial}, we get
\begin{align*}
\log \binom{p}{k} \geq k(\log p - \log k) = k (1-\rate{k})\log p
\end{align*}	
and
\begin{align*}
\log \binom{p}{k - r}  \leq& (k-r) (1+\log p - \log (k-r)) \nonumber \\
\lesssim & (k-r) (1 - \rate{k})\log p   \ ,
\end{align*}
where the last inequality follows from $r \leq \lambda k$ and, in particular, $\log (k - r)\geq \log (k-\lambda k) = \log k + \log(1-\lambda)\approx \log k \approx \rate{k} \log p$. Hence we finally get the bound for the rate of the cardinality $C(r)$:
\begin{align*}
\rate{C(r)} &\coloneqq \frac{\log C(r)}{\log p} \\
&\gtrsim k(1-r) - (k -r) (1 - \rate{k}) - \frac{k + \log (k-r)}{\log p} \\ 
&= r(1 - \rate{k}) - o(r) \approx r(1-\rate{k}) \ ,  
\end{align*}
where the last equality is due to the fact that $r \in \omega(\frac{k}{\log p}) $.
\end{proof}

\begin{proof}[Proof of \Cref{th:infoTheoretic}]
Let $\bY^{(u)}$ denote the $\binom{p}{d}$-dimensional vector obtained by the unfolding of the $d$-tensor $\bY$ into a vector containing its non-zero components (all $\bY_{i_1 i_2\cdots i_d}$ for distinct $d$-tuples $i_1<i_2<\cdots < i_d$). Since each component of vector $\bY^{(u)}$ is a Gaussian r.v. according to \Cref{eq:weight-def}, 
the vector $\bY^{(u)}$ is also distributed as 
a Gaussian,
\begin{align}
\label{eq:flat-low}
\bY^{(u)}(\hat{\bx}) \sim \Gaussian \left ( \beta  (\hat{\bx}^{\otimes d})^{(u)} , \id_{\binom{p}{d}} \right ) \coloneqq P_{\hat{\bx}} \
\end{align}
where $\id_{\binom{p}{d}}$ is the identity matrix in $\reals^{\binom{p}{d}}$.
For any $l\in \{0,\ldots,k\}$ and for $r=k-l$, let $\mathcal{C}(r)$ be a maximum-cardinality coverage defined according to \Cref{le:R-lb}. For any $\bx' \in \mathcal{C}_{p,k}$ (possibly $\bx' \not \in \mathcal{C}(r)$) we consider its closest solution in  $\mathcal{C}(r)$ according to the scalar product:
$$\mathcal{P}_{\mathcal{C}(r)}(\bx') \coloneqq \argmax_{\bx'' \in\mathcal{C}(r)} \langle \bx', \bx'' \rangle\ .
$$ 
Let $\tilde \bx \coloneqq \mathcal{P}_{\mathcal{C}(r)}(\bx_{\MLE}(\bY))$, where $\bx_{\MLE}(\bY)$ is the MLE characterized in \Cref{eq:mle_estimator}. By Fano's inequality \cite{cover1999elements} we have that, for $\bx'$ be chosen uniformly at random in $\mathcal{C}(r)$,
\begin{align*}
1-P_r &\ge  \P\left(\tilde \bx \ne \bx'\right) \nonumber \\
&\ge 1 - \frac{{\rm I}(\bx ; \bY)+\log 2}{\log C(r)} 
\end{align*}
where $\rm I(\cdot;\cdot)$ denotes the mutual information.
Using the generalized Fano's inequality \cite{verdu1994generalizing} we further have:
\begin{align*}
{\rm I}(\bx;\bY) \le \frac{1}{C(r)^2}\sum_{\bx'' \ne
\bx' \in \mathcal{C}(r)}D(P_{\bx'} \Vert P_{\bx''}) \le  \binom{k}{d} \beta^2 \ ,
\end{align*}
where $D(\cdot || \cdot)$ denotes the Kullback–Leibler divergence and the second inequality follows from \Cref{le:Kullback}. Hence, from the definition of $\gamma$, we get
\begin{align*}
1 - P_r \ge 1- \frac{\binom{k}{d} \beta^2 + \log
2}{\log C(r)} &\approx 1- \frac{2\gamma^2 k \cdot \log p}{\log C(r)} \nonumber \\
&= 1- \frac{2\gamma^2 k}{\rate{C(r)}}
\end{align*}
where in the last equality we used the definition of $\rate{C(r)}$ in \Cref{eq:rate} and the previous approximation comes from the fact that $C(r) \rightarrow \infty$. 
The bound $\rate{C(r)} \gtrsim r(1-\rate{k})$ in  \Cref{le:R-lb}, with $r = \lambda k$ and $\lambda \in (0,1)$ constant, yields
\begin{align*}
P_r \lesssim 2\gamma^2 \frac{1}{\lambda(1-\rate{k})}
\end{align*}
from which the claim follows easily from the arbitrariness of $\lambda$.
\end{proof}

\section{Useful lemmas}
\begin{lemma}
\label{le:gaussian_tail}
For any $X \sim \Gaussian(\mu,\sigma^2)$ and any $c>0$, the following concentration inequalities hold:
\begin{equation*}
\left(\frac{1}{c} - \frac{1}{c^2}\right)\cdot  \frac{e^{-c^2/2\sigma^2}}{\sqrt{2\pi}} \leq \Pr(X \ge \mu+c) \leq \frac{1}{c}\cdot  \frac{e^{-c^2/2\sigma^2}}{\sqrt{2\pi}}.
\end{equation*}
\end{lemma}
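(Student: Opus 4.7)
The statement is the classical Mills-ratio tail bound for the normal distribution, so the plan is to recall the standard proof via integration by parts rather than invent anything new.

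First I would reduce to a standard normal by the change of variables $Z = (X - \mu)/\sigma$, so that $\Pr(X \geq \mu + c) = \Pr(Z \geq t)$ with $t = c/\sigma$, and the problem becomes bounding $\Pr(Z \ge t) = (2\pi)^{-1/2}\int_t^\infty e^{-u^2/2}\,du$ from above and below. After establishing the bound for the standard normal, I would undo the substitution to recover the statement in terms of $c$ and $\sigma$.

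For the upper bound, on the region $u \ge t$ the factor $u/t$ is at least $1$, so
\[
\int_t^\infty e^{-u^2/2}\,du \;\le\; \frac{1}{t}\int_t^\infty u\, e^{-u^2/2}\,du \;=\; \frac{e^{-t^2/2}}{t},
\]
which gives the upper Mills bound. For the lower bound I would use the antiderivative identities
\[
\frac{d}{du}\!\left(-\frac{e^{-u^2/2}}{u}\right) = \left(1 + \frac{1}{u^2}\right)e^{-u^2/2},\qquad \frac{d}{du}\!\left(-\frac{e^{-u^2/2}}{u^3}\right) = \left(\frac{1}{u^2} + \frac{3}{u^4}\right)e^{-u^2/2}.
\]
Integrating the first from $t$ to $\infty$ gives $\int_t^\infty e^{-u^2/2}\,du = e^{-t^2/2}/t - \int_t^\infty e^{-u^2/2}/u^2\,du$, and integrating the second shows $\int_t^\infty e^{-u^2/2}/u^2\,du \le e^{-t^2/2}/t^3$. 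Together these yield the sharper bound $(1/t - 1/t^3)e^{-t^2/2}$; the weaker form $(1/t - 1/t^2)e^{-t^2/2}$ stated in the lemma then follows because $1/t^2 \ge 1/t^3$ for $t \ge 1$ (and the bound is vacuous otherwise since the stated lower bound becomes nonpositive for $t \le 1$, so the inequality is trivial there).

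There is no real obstacle; the only thing to be careful about is making sure the integration-by-parts identities are applied on the correct intervals and that boundary terms at $\infty$ vanish (which they do since $e^{-u^2/2}$ decays super-polynomially). Once the standard-normal version is in hand, substituting $t = c/\sigma$ and using $e^{-t^2/2} = e^{-c^2/(2\sigma^2)}$ completes the argument.
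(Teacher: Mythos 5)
Your route is the standard Mills-ratio argument (integration by parts / the antiderivative identities), which is exactly the proof the paper outsources to its reference, so there is no disagreement about method. The gap is in your last sentence: after reducing to the standard normal you correctly obtain
\[
\Pr(Z\ge t)\le \frac{1}{t}\,\frac{e^{-t^2/2}}{\sqrt{2\pi}},\qquad
\Pr(Z\ge t)\ge \Bigl(\frac{1}{t}-\frac{1}{t^3}\Bigr)\frac{e^{-t^2/2}}{\sqrt{2\pi}},
\]
but substituting $t=c/\sigma$ turns $1/t$ into $\sigma/c$ and $1/t^3$ into $\sigma^3/c^3$. What you have actually proved is
\[
\Bigl(\frac{\sigma}{c}-\frac{\sigma^3}{c^3}\Bigr)\frac{e^{-c^2/2\sigma^2}}{\sqrt{2\pi}}
\;\le\;\Pr(X\ge\mu+c)\;\le\;\frac{\sigma}{c}\,\frac{e^{-c^2/2\sigma^2}}{\sqrt{2\pi}},
\]
which coincides with the lemma as printed only when $\sigma=1$. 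This is not a cosmetic issue you can absorb: for instance with $\mu=0$, $\sigma=2$, $c=2$ one has $\Pr(X\ge 2)\approx 0.159$ while the printed upper bound evaluates to $\tfrac12 e^{-1/2}/\sqrt{2\pi}\approx 0.121$, so the statement with $1/c$ in place of $\sigma/c$ is simply false for general $\sigma$. Your proof is therefore a correct proof of the (correct) Mills bound, but the claim that ``substituting $t=c/\sigma$ completes the argument'' for the lemma as stated does not go through; you need to either restrict to $\sigma=1$ or record the $\sigma$-corrected form. The same mismatch infects your vacuousness remark for the lower bound: the prefactor you end up with is nonpositive for $t\le 1$, i.e.\ $c\le\sigma$, not for $c\le 1$ as the printed statement would require. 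Everything else (the upper bound via $u/t\ge 1$, the two antiderivative identities, the vanishing boundary terms, and the weakening $1/t^3\le 1/t^2$ for $t\ge 1$) is fine.
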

\begin{proof}
See \cite[Section~7.1]{feller2008introduction}.
\end{proof}

\begin{lemma}[Theorem~2.2 in \cite{lopes2018maximum}]
\label{le:max-Gaussians}
For any constant $\delta_0 \in (0,1)$, the maximum of  $N$ possibly dependent Gaussian random variables $X_1,\ldots,X_N \sim \Gaussian(0,\sigma_X^2)$ satisfies
\begin{align}
\label{eq:max-Gaussians}
\Pr\left(\max(X_1,\ldots,X_N) \leq \sigma_X \cdot \delta_0 \sqrt{2(1-\rho_0) \log N}\right) &\le \nonumber \\
&\hspace{-2cm} \le C \cdot \frac{\log^\eta N}{N^\xi}
\end{align}
where  $C=C(\delta_0,\rho_0)$ is a constant depending only on $\delta_0$ and $\rho_0$ and
\begin{align}
\label{eq:max-Gaussians-constants}
\eta = \frac{1 - \rho_0}{\rho_0} (1 - \delta_0) \ , && \text{ and } && \xi = \frac{1 - \rho_0}{\rho_0} (1 - \delta_0)^2 \ .
\end{align}
\end{lemma}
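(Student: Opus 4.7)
The plan is to reduce to the equicorrelated case via Slepian's Gaussian comparison inequality and then exploit the explicit additive decomposition available there.

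\textbf{Step 1 (Slepian reduction).} The hypotheses are that each $X_i$ is centered Gaussian with common variance $\sigma_X^2$ and that pairwise correlations are uniformly bounded by $\rho_0$. I would introduce an auxiliary centered Gaussian vector $Y_1,\ldots,Y_N$ with the same marginal variance $\sigma_X^2$ and all pairwise correlations equal to $\rho_0$, i.e.\ $\E[Y_iY_j]=\rho_0\sigma_X^2$ for $i\neq j$. Slepian's inequality (strong form, for centered Gaussians with equal variances and pointwise-smaller covariances) then yields
\begin{equation*}
\Pr\!\left(\max_i X_i\le t\right)\le\Pr\!\left(\max_i Y_i\le t\right)
\end{equation*}
for every $t$, so it suffices to control the right-hand side.

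\textbf{Step 2 (Equicorrelated decomposition).} The equicorrelated vector admits the explicit representation
\begin{equation*}
Y_i=\sigma_X\bigl(\sqrt{\rho_0}\,Z_0+\sqrt{1-\rho_0}\,Z_i\bigr),\qquad i=1,\ldots,N,
\end{equation*}
with $Z_0,Z_1,\ldots,Z_N$ i.i.d.\ standard Gaussians. Hence
\begin{equation*}
\max_i Y_i=\sigma_X\bigl(\sqrt{\rho_0}\,Z_0+\sqrt{1-\rho_0}\,M_Z\bigr),
\end{equation*}
where $M_Z:=\max_i Z_i$ concentrates around $\sqrt{2\log N}$ by classical extreme-value theory.

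\textbf{Step 3 (Reformulation and conditioning).} With $t=\sigma_X\delta_0\sqrt{2(1-\rho_0)\log N}$, the event $\{\max_i Y_i\le t\}$ becomes
\begin{equation*}
\sqrt{\rho_0}\,Z_0+\sqrt{1-\rho_0}\,M_Z\le\delta_0\sqrt{2(1-\rho_0)\log N}.
\end{equation*}
Since $M_Z\gtrsim(1-o(1))\sqrt{2\log N}$ with high probability, this event essentially forces $Z_0$ to be strongly negative:
\begin{equation*}
Z_0\lesssim-\sqrt{\tfrac{1-\rho_0}{\rho_0}}\cdot\sqrt{2(1-\delta_0)^2\log N}
\end{equation*}
up to a fluctuation of order $\sqrt{(1-\rho_0)/\rho_0}\cdot(\sqrt{2\log N}-M_Z)$. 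The quantitative step is therefore to integrate over the joint law of $(Z_0,M_Z)$: condition on $M_Z=(1-\tau)\sqrt{2\log N}$, use the known density of $M_Z$ (which is $\sim N\phi(M_Z)\Phi(M_Z)^{N-1}$), and apply the Gaussian upper tail from Lemma~\ref{le:gaussian_tail} to $Z_0$.

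\textbf{Step 4 (Extraction of the exponents).} Applied at the leading order $\tau=0$, the Gaussian tail for $Z_0$ gives probability $\lesssim N^{-\xi}$ with $\xi=\tfrac{1-\rho_0}{\rho_0}(1-\delta_0)^2$, matching the target exponent. The polylogarithmic factor $\log^\eta N$ with $\eta=\tfrac{1-\rho_0}{\rho_0}(1-\delta_0)$ arises from the $1/c$ prefactor in the Gaussian tail inequality evaluated at $c=\sqrt{(1-\rho_0)/\rho_0\cdot 2(1-\delta_0)^2\log N}$, combined with the Laplace-type integration in $\tau$ over the extreme-value correction of $M_Z$; the exponent of $\log N$ is one less than that of $N$ because each factor of $\sqrt{\log N}$ in the shift of $Z_0$ produces exactly one power of $N$.

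The conceptual content lies in Steps~1--2: Slepian plus the additive representation turns a problem about $N$ dependent Gaussians into a one-dimensional problem about a common Gaussian factor $Z_0$. The main obstacle is Step~4 --- matching the precise constants $\eta,\xi$ requires a careful two-dimensional integration that balances the sharp Gaussian tail of $Z_0$ against the Gumbel-type fluctuations of $M_Z$, rather than any deep new probabilistic input.
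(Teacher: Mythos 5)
Note first that the paper does not prove \Cref{le:max-Gaussians} at all: it is imported verbatim as Theorem~2.2 of \cite{lopes2018maximum}, so there is no internal argument to compare you against, and the expected ``proof'' here is the citation. That said, your sketch is essentially the standard route to this kind of lower-tail bound (and, to the best of my knowledge, the one underlying the cited theorem): a Slepian comparison with the equicorrelated field, the representation $Y_i=\sigma_X(\sqrt{\rho_0}\,Z_0+\sqrt{1-\rho_0}\,Z_i)$, and a one-dimensional Gaussian tail estimate for the common factor $Z_0$ played against the extreme-value behaviour of $M_Z=\max_i Z_i$. One point you handle correctly but which deserves emphasis: as stated in the paper the lemma suppresses the hypothesis that the pairwise correlations are bounded above by $\rho_0$; you reinstate it, and it is exactly how the lemma is invoked in the proof of \Cref{le:max-concentration} (one fixes $\rho_0<1$ after showing $\rho(r)\to 0$). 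Without that hypothesis your Step~1 has nothing to compare against, so it must appear explicitly.

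The only place where your accounting is imprecise is Step~4, though harmlessly so for an upper bound. The $1/c$ prefactor of \Cref{le:gaussian_tail} makes the tail of $Z_0$ \emph{smaller}, not larger, so it cannot be the source of the polylogarithmic factor; the genuine source is the centering correction of $M_Z$, whose typical value is $\sqrt{2\log N}-\tfrac{\log\log N+\log(4\pi)}{2\sqrt{2\log N}}$. Propagating this downward shift through the Gaussian tail of $Z_0$ yields a factor of order $(\log N)^{\frac{1-\rho_0}{2\rho_0}(1-\delta_0)}=(\log N)^{\eta/2}$, while the integration over the Gumbel-scale fluctuations of $M_Z$ (density $\sim e^{s}e^{-e^{s}}$ below the centering, against a gain $e^{\eta s}$ in the $Z_0$ tail) contributes only a convergent Gamma-type constant. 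Carried out this way, your plan gives a bound of the form $C(\delta_0,\rho_0)\,(\log N)^{\eta/2}N^{-\xi}$, which is at least as strong as \Cref{eq:max-Gaussians}, so the argument does establish the lemma; but the heuristic ``each factor of $\sqrt{\log N}$ in the shift of $Z_0$ produces exactly one power of $N$'' should be replaced by this computation.
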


\begin{lemma}
\label{le:inequility_t}
For any random variables $A$ and $B$ and for any $t \in \reals$ the following inequality holds:
\begin{equation*}
\P(A \ge B) \le \P(A > t) + \P(t \ge Y)
\end{equation*}
\end{lemma}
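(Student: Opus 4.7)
The plan is to prove the inclusion
\begin{equation*}
\{A \ge B\} \subseteq \{A > t\} \cup \{t \ge B\}
\end{equation*}
as a set-theoretic containment of events, and then immediately conclude by the union bound (subadditivity of $\P$). The only work is verifying the inclusion: I would take an arbitrary outcome $\omega \in \{A \ge B\}$ and split on whether $A(\omega) > t$. If $A(\omega) > t$, the outcome already lies in the first event on the right. Otherwise $A(\omega) \le t$, and chaining $t \ge A(\omega) \ge B(\omega)$ places $\omega$ in the second event $\{t \ge B\}$. This exhausts both cases, so the inclusion holds pointwise.

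Applying $\P$ to both sides and using $\P(E_1 \cup E_2) \le \P(E_1) + \P(E_2)$ gives the stated bound (reading the ``$Y$'' in the display as a typographical slip for $B$). There is effectively no obstacle: the argument requires no hypothesis on the joint law of $(A,B)$, no independence, and no assumption on $t$, which is precisely why the lemma is a convenient splitting tool used in the union-bound analyses for \Cref{th:UB} and in the lower-bound argument of \Cref{le:LB-R}, where $A$ and $B$ are taken to be (functions of) $\max_{\hat\bx} S(\hat\bx)$ and $S(\bx)$ respectively.
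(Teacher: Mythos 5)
Your proof is correct and follows essentially the same route as the paper's: the paper performs the identical case split on $\{A>t\}$ versus $\{A\le t\}$ via the law of total probability and then bounds each term, which is just the conditional-probability phrasing of your set inclusion plus union bound. You are also right that the ``$Y$'' in the statement is a typo for $B$.
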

\begin{proof}
Using the conditional probabilities we have
\begin{align*}
\P(A \ge B ) &= \P(A \ge B | A > t) \P(A > t) + \\
&\hspace{1cm} + \P(A \ge B | A \le t) \P(A \le t),
\end{align*}
and using the simple bounds $\P(A \ge B | A > t) \le 1$, $\P(A \le t) \le 1$ and $\P(A \ge B | A \le t) \le \P(B \le t)$ we get the claim.
\end{proof}

\begin{lemma}
\label{le:log_binomial}
For any integers $a$ and $b$ the following concentration inequalities hold for the log binomial:
\begin{align*}
a(\log b - \log a) \leq \log\binom{b}{a} \leq a(1+\log b - \log a).
\end{align*}
\end{lemma}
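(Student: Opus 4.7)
The plan is to start from the product representation
$$\binom{b}{a}=\prod_{i=0}^{a-1}\frac{b-i}{a-i},$$
valid for $1\le a\le b$ (the other cases are trivial or vacuous), and to bound each of the $a$ factors separately from above and from below. Taking logarithms then reduces each inequality in the lemma to a single-line estimate on $\log\binom{b}{a}=\sum_{i=0}^{a-1}\log\frac{b-i}{a-i}$.

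For the lower bound, the key observation is that $\frac{b-i}{a-i}\ge\frac{b}{a}$ for every $i\in\{0,\dots,a-1\}$ whenever $b\ge a$, since cross-multiplying gives the equivalent condition $bi\ge ai$. Multiplying the $a$ factors yields $\binom{b}{a}\ge (b/a)^a$, and taking logs produces exactly $a(\log b-\log a)$, giving the left-hand inequality.

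For the upper bound, I would use the looser estimate $b-i\le b$ for each factor in the numerator of $\binom{b}{a}=\frac{b(b-1)\cdots(b-a+1)}{a!}$, which yields $\binom{b}{a}\le b^a/a!$. Combining this with the standard Stirling-type lower bound $a!\ge (a/e)^a$ gives $\binom{b}{a}\le (eb/a)^a$, and taking logs produces $a(1+\log b-\log a)$, matching the right-hand inequality.

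There is no real technical obstacle here: the only care needed is to restrict to the nontrivial range $1\le a\le b$ (for $a=0$ both sides are zero after interpreting $0\log 0=0$, and for $a>b$ the binomial vanishes so the stated inequalities are degenerate anyway) and to invoke the Stirling bound $a!\ge (a/e)^a$, which is elementary and can be proved by induction on $a$ using $(1+1/a)^a\le e$.
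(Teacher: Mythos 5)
Your proof is correct; the paper states this lemma without any proof (it is the standard pair of bounds $(b/a)^a \le \binom{b}{a} \le (eb/a)^a$ in logarithmic form), and your derivation --- the termwise estimate $\frac{b-i}{a-i}\ge\frac{b}{a}$ for the lower bound and $\binom{b}{a}\le b^a/a!$ combined with $a!\ge(a/e)^a$ for the upper bound --- is exactly the canonical argument the authors are implicitly relying on. Your handling of the degenerate cases $a=0$ and $a>b$ is also appropriate, since the lemma is only invoked in the paper with $1\le a\le b$.
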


\begin{lemma}
\label{le:binom_inequality}
For any integers $k$, $d$ and $k'$ with $k' \le k$ the following holds: 
\begin{align*}
\frac{k-{k'}}{\binom{k}{d} - {\binom{k'}{d}}} \le \frac{k}{\binom{k}{d}}. 
\end{align*}
\end{lemma}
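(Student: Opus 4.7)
The plan is to cross-multiply and reduce the inequality to a monotonicity statement about a simple ratio of binomials. Assuming the nondegenerate case $d \le k' < k$ (if $k' < d$ then $\binom{k'}{d}=0$ and the bound is immediate, while $k' = k$ makes both sides $0/0$), the inequality
$$\frac{k-k'}{\binom{k}{d} - \binom{k'}{d}} \le \frac{k}{\binom{k}{d}}$$
is, after multiplying out and canceling the common term $k\binom{k}{d}$, equivalent to
$$k\,\binom{k'}{d} \le k'\,\binom{k}{d}, \qquad \text{i.e.,} \qquad \frac{\binom{k'}{d}}{k'} \le \frac{\binom{k}{d}}{k}.$$
So it suffices to show that $n \mapsto \binom{n}{d}/n$ is nondecreasing on integers $n \ge d$.

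For this I would use the standard identity $\binom{n}{d} = \tfrac{n}{d}\binom{n-1}{d-1}$, which yields
$$\frac{\binom{n}{d}}{n} \;=\; \frac{1}{d}\binom{n-1}{d-1}.$$
Since $\binom{n-1}{d-1}$ is manifestly nondecreasing in $n$ (as a Pascal's-triangle column), the right-hand side is nondecreasing in $n$, and substituting $n=k'$ and $n=k$ with $k' \le k$ gives exactly $\binom{k'}{d}/k' \le \binom{k}{d}/k$, which as noted is equivalent to the claim.

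There is no real obstacle here; the only care needed is the sign of the denominator $\binom{k}{d}-\binom{k'}{d}$ when clearing the fraction (it is strictly positive under $d \le k' < k$, and the boundary cases can be disposed of directly). Two alternatives would also work: (i) writing $\binom{k}{d} - \binom{k'}{d} = \sum_{j=k'}^{k-1}\binom{j}{d-1}$ by the hockey-stick identity and bounding each summand below by $\binom{k'}{d-1}$, or (ii) proving the discrete monotonicity of $\binom{n}{d}/n$ directly by checking $(n+1)\binom{n}{d} \ge n\binom{n+1}{d}$. The identity-based argument above seems cleanest and is the route I would take.
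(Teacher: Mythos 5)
Your proof is correct and follows essentially the same route as the paper: both reduce the claim, after clearing denominators, to the equivalent inequality $k\binom{k'}{d}\le k'\binom{k}{d}$ and dispose of the degenerate case $k'<d$ separately. The only (cosmetic) difference is in the last step, where the paper writes $\binom{k'}{d}/\binom{k}{d}$ as a product of factors $\frac{k'-i}{k-i}\le 1$, whereas you invoke the absorption identity $\binom{n}{d}/n=\binom{n-1}{d-1}/d$ and monotonicity of the Pascal column.
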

\begin{proof}
With simple manipulation, observe that this inequality is equivalent to 
\begin{align*}
\frac{{\binom{k'} {d}}}{\binom{k}{d}}  \leq \frac{{k'}}{k}\ . 
\end{align*}
For $k' < d$ this inequality is trivially satisfied since $\binom{k'}{d}=0$. Otherwise we can write the previous inequality as 
\begin{align*}
\frac{\binom{k'}{d}}{\binom{k}{d}} = \frac{{k'}!}{d!({k'}-d)!}\frac{d!(k-d)!}{k!} 
&=\frac{{k'}({k'}-1)\cdots ({k'}-d+1)}{k(k-1)\cdots (k-d+1)} \\
&\le \frac{{k'}}{k}
\end{align*}
which is satisfied for any ${k'} \leq k$ since all these terms satisfy $\frac{{k'}-i}{k-i} \leq 1$, for $1\leq i \leq d-1$.
\end{proof}

\begin{lemma}
\label{le:UB-concentration}
For any $\Delta>0$, define $t_\Delta \coloneqq \sqrt{2 \Delta \log p}.$ Then the following bound holds for any $\hat \bx \in \mathcal S_{k'}^{F}$:
\begin{equation}
\label{eq:bound-generic:m-intersecting}
\Pr \left(S^{-F}(\hat \bx) \ge t_\Delta \right) \leq p(k',\Delta) \coloneqq \left(\frac{1}{p}\right)^{\frac{\Delta}{D({k'})}} \frac{1}{\sqrt{4 \pi \Delta \log p}}.
\end{equation}
Moreover, the following inequalities hold:
\begin{align}
\label{eq:LB-bound:m-intersecting}
&\P\left(\max_{\hat \bx \in \mathcal{S}_{k'}^{-F}} S^{-F}(\hat \bx) \ge  t_{\Delta} \right) \leq Q(k') \cdot p(k', \Delta) \\
\label{eq:UB-bound:planted}
&\Pr\left(S^{-F}(\bx) < D(k') \beta - t_\Delta \right) \leq p(k', \Delta).
\end{align}
\end{lemma}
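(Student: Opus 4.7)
The plan is to observe that each of the three bounds in the lemma reduces to a standard Gaussian tail estimate, once the correct distribution of $S^{-F}(\hat{\bx})$ is identified. The key structural observation is this: for any $\hat{\bx} \in \mathcal{S}_{k'}^F$, the quantity $S^{-F}(\hat{\bx}) = S(\hat{\bx}) - S(\bx_F)$ is the sum of the $\binom{k}{d} - \binom{k'}{d} = D(k')$ hyperedges that lie in $\hat{\bx}$ but not entirely in $F$. Each such hyperedge must contain at least one node in $\hat{\bx}\setminus F$, and because $\hat{\bx}$ shares \emph{exactly} the set $F$ with the planted $\bx$, any node of $\hat{\bx}\setminus F$ lies outside of $\bx$. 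Consequently every edge contributing to $S^{-F}(\hat{\bx})$ is unplanted, its weight is pure $\Gaussian(0,1)$ noise, and the $D(k')$ weights are mutually independent. Hence $S^{-F}(\hat{\bx}) \sim \Gaussian(0, D(k'))$.

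To prove \Cref{eq:bound-generic:m-intersecting}, I would apply \Cref{le:gaussian_tail} with $\mu=0$, $\sigma^2 = D(k')$ and $c = t_\Delta = \sqrt{2\Delta \log p}$, getting
\begin{align*}
\Pr(S^{-F}(\hat{\bx}) \geq t_\Delta) \leq \frac{1}{t_\Delta} \cdot \frac{\exp(-t_\Delta^2/(2D(k')))}{\sqrt{2\pi}} = \frac{p^{-\Delta/D(k')}}{\sqrt{4\pi \Delta \log p}},
\end{align*}
which is exactly $p(k',\Delta)$. For \Cref{eq:LB-bound:m-intersecting}, I would note that $|\mathcal{S}_{k'}^F| = \binom{p-k}{k-k'} = Q(k')$ (the solutions sharing exactly the fixed set $F$ with the planted one are obtained by choosing the remaining $k-k'$ nodes from the $p-k$ non-planted ones). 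A direct union bound over these $Q(k')$ solutions combined with \Cref{eq:bound-generic:m-intersecting} yields the claim.

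Finally, for \Cref{eq:UB-bound:planted}, I use that $S^{-F}(\bx) = S(\bx) - S(\bx_F)$ now sums $D(k')$ hyperedges all of which are planted, so each contributes independently a $\Gaussian(\beta,1)$ weight, giving $S^{-F}(\bx) \sim \Gaussian(D(k')\beta, D(k'))$. Then the event $\{S^{-F}(\bx) < D(k')\beta - t_\Delta\}$ is equivalent to $\{-(S^{-F}(\bx) - D(k')\beta) > t_\Delta\}$, and applying \Cref{le:gaussian_tail} to the centered Gaussian $-(S^{-F}(\bx) - D(k')\beta) \sim \Gaussian(0, D(k'))$ gives the identical tail estimate $p(k',\Delta)$.

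There is no genuine obstacle here: the only point requiring care is the structural identification of which edges appear in $S^{-F}(\hat{\bx})$ and the verification that, for $\hat{\bx} \in \mathcal{S}_{k'}^F$, all of them are noise-only. Once that is established, the three bounds are immediate consequences of a single Gaussian tail inequality applied to Gaussians of variance $D(k')$.
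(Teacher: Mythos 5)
Your proposal is correct and follows essentially the same route as the paper's proof: identify $S^{-F}(\hat\bx)$ as a sum of $D(k')$ independent unbiased (resp., for the planted solution, biased) Gaussian edge weights, apply the upper tail bound of \Cref{le:gaussian_tail} with $\sigma^2 = D(k')$ and $c = t_\Delta$, take a union bound over the $Q(k')$ solutions in $\mathcal S_{k'}^{F}$, and handle the lower tail of the planted sum by symmetry about its mean $D(k')\beta$. Your explicit justification that every hyperedge contributing to $S^{-F}(\hat\bx)$ contains a node of $\hat\bx\setminus F$ lying outside $\bx$, and is therefore pure noise, is a welcome elaboration of a point the paper states without detail.
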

\begin{proof}
Observe that each $\hat \bx \in \mathcal{S}^{(-F)}$, $S^{-F}(\hat \bx)$ consists of $D(k')$ non-biased edges, and therefore $S^{-F}(\hat \bx) \sim \Gaussian(0,D(k'))$. 
Hence, by using the tail bound in \Cref{le:gaussian_tail} with $t= t_\Delta$ we have:
\begin{align*}
\Pr\left(S^{-F}(\hat \bx) \ge t_\Delta \right)  \le \frac{1}{t_\Delta}\cdot \frac{e^{-t^2_\Delta/2D(k')}}{\sqrt{2\pi}} 
&= \frac{e^{-\Delta \log p /D(k')}}{t_\Delta \sqrt{2\pi}} \nonumber  \\
&= \frac{p^{-\Delta/D(k')}}{\sqrt{2\pi \Delta} \cdot \sqrt{2 \log p}}
\end{align*}
which proves \Cref{eq:bound-generic:m-intersecting}. By the union bound and \Cref{eq:bound-generic:m-intersecting} we obtain:
\begin{align*}
\Pr\left(\max_{\hat \bx \in \mathcal{S}_{k'}^{-F}} S^{-F}(\hat \bx) \ge t_\Delta \right) =& \Pr\left(\bigcup_{\hat \bx \in \mathcal{S}_{k'}^{-F}} S^{-F}(\hat \bx) \ge t_\Delta \right) \\
& \le Q(k') p(k',\Delta) 
\end{align*}
from which \Cref{eq:LB-bound:m-intersecting} follows noting that $|\mathcal{S}_{k'}^{-F}| = Q(k')$. 
Finally, since all $D(k')$ edges of $S^{-F}(\bx)$ are biased, we have  $S^{-F}(\bx) \sim \Gaussian(D(k') \beta, D(k'))$. Therefore, by using \Cref{le:gaussian_tail} with $t = D(k') \beta + t_\Delta$, we get
\begin{align*}
\P\left(S^{-F}(\bx) \le D(k') \beta - t_\Delta\right) &= \P\left(S^{-F}(\bx) \ge D(k') \beta + t_\Delta\right) \\
& \le \frac{1}{t_\Delta}\cdot \frac{e^{-t^2_\Delta/2D(k')}}{\sqrt{2\pi}}. \end{align*} 
The remainder of the proof is as above. 
\end{proof}

\begin{lemma}
\label{le:existance_t}
For $\gamma > \gamma^{({k'})}_{UB_\epsilon}$, there exists $t$ such that
\begin{equation*}
t_{\Delta'} < t < D(k') \beta - t_{\Delta''}
\end{equation*}
with
\begin{equation}
\frac{\Delta'}{D(k')} = \frac{\log M(k')}{\log p} + \epsilon\ \quad \text{and} \quad \frac{\Delta''}{D(k')} = \frac{\log \binom{k}{k'}}{\log p} + \epsilon \ .
\end{equation}
\end{lemma}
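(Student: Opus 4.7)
The plan is to show directly that $\gamma > \gamma^{(k')}_{UB_\epsilon}$ is, up to algebraic rewriting, equivalent to the inequality $t_{\Delta'} + t_{\Delta''} < D(k')\beta$, which then makes the existence of an intermediate $t$ immediate.

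First, I would unfold the definitions. With the choice of $\Delta'$ and $\Delta''$ in the statement, using $t_\Delta = \sqrt{2\Delta \log p}$ from \Cref{le:UB-concentration}, we get
\begin{align*}
t_{\Delta'} &= \sqrt{2 D(k') \log p}\,\sqrt{\tfrac{\log M(k')}{\log p} + \epsilon}, \\
t_{\Delta''} &= \sqrt{2 D(k') \log p}\,\sqrt{\tfrac{\log \binom{k}{k'}}{\log p} + \epsilon},
\end{align*}
so that
\begin{equation*}
t_{\Delta'} + t_{\Delta''} = \sqrt{2 D(k') \log p}\cdot UB_\epsilon(k'),
\end{equation*}
where $UB_\epsilon(k')$ is exactly the quantity defined in \Cref{eq:UB:gamma_k}.

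Next I would rewrite the threshold condition. By the definition of $\gamma$ in \Cref{eq:rescaling}, we have $\beta = \gamma \sqrt{2k\log p / \binom{k}{d}}$, hence
\begin{equation*}
D(k')\beta = \gamma \sqrt{2 D(k') \log p}\cdot \sqrt{\tfrac{k D(k')}{\binom{k}{d}}}.
\end{equation*}
Dividing the target inequality $t_{\Delta'} + t_{\Delta''} < D(k')\beta$ by the common positive factor $\sqrt{2 D(k') \log p}$ and then by $\sqrt{k D(k')/\binom{k}{d}}$, we see that it is equivalent to
\begin{equation*}
\gamma > \sqrt{\tfrac{\binom{k}{d}}{k D(k')}}\cdot UB_\epsilon(k') = \gamma^{(k')}_{UB_\epsilon},
\end{equation*}
which is exactly the hypothesis.

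Therefore, under the hypothesis $\gamma > \gamma^{(k')}_{UB_\epsilon}$ we obtain $t_{\Delta'} < D(k')\beta - t_{\Delta''}$, so any real number $t$ in the non-empty open interval $(t_{\Delta'},\, D(k')\beta - t_{\Delta''})$ satisfies the required double inequality. The proof is essentially a sequence of algebraic manipulations, with no real technical obstacle; the only thing to be careful about is verifying that the factors being divided out are strictly positive, which holds since $D(k') \geq 1$ for $k' < k$ and $\log p > 0$ for $p$ large enough.
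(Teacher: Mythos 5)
Your proof is correct and follows essentially the same route as the paper's: both reduce the claim to the inequality $t_{\Delta'} + t_{\Delta''} < D(k')\beta$, substitute the definitions of $t_\Delta$, $\Delta'$, $\Delta''$, and $\gamma$, and observe that this is exactly the hypothesis $\gamma > \gamma^{(k')}_{UB_\epsilon}$. Your version is slightly more explicit about carrying the rescaling of $\beta$ into $\gamma$ and about the positivity of the factors divided out, but the argument is the same.
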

\begin{proof}
We can rewrite the inequality $t_{\Delta'}  < D(k') \beta - t_{\Delta''}$ as follows:
\begin{align*}
D(k') \beta &>  t_{\Delta'} + t_{\Delta''}  = \sqrt{2 \log p}(\sqrt{\Delta'} + \sqrt{\Delta''})  
\end{align*}
hence, by the rescaling in \Cref{eq:rescaling}, we get the condition for the SNR:
\begin{align*}
\frac{\beta}{\sqrt{2 \log p}} &>  \frac{1}{D(k')}(\sqrt{\Delta'} + \sqrt{\Delta''}) \\
&=\sqrt{\frac{1}{D(k')}} \left(\sqrt{\frac{\log M(k')}{\log p}+\epsilon} + \sqrt{\frac{\log \binom{k}{k'}}{\log p}+\epsilon}\right) \\
&= \sqrt{\frac{1}{D(k')}} \cdot UB_{\epsilon}({k'})
\end{align*}
from which the claim follows.
\end{proof}

\begin{lemma}
\label{le:gamma-constants}
For every $k$ and $k'\in\{0,\ldots,k-1\}$, it holds that
\begin{align}
\gamma^{({k'})}_{UB_\epsilon} \le & \left(\sqrt{1+ \rate{k}-2 \rate{k}'+\epsilon} + \sqrt{\rate{k} - \rate{k}'+\epsilon}\right) \times \nonumber \\
&\hspace{3cm} \times \sqrt{\frac{(k-{k'})\binom{k}{d}}{(\binom{k}{d} - \binom{k'}{d}) k }}
\label{eq:lemma_first_part}\\ 
\leq &  \left(\sqrt{1+ \rate{k}-2 \rate{k}'+\epsilon} + \sqrt{\rate{k} - \rate{k}'+\epsilon}\right) \ .
\label{eq:lemma_second_part}
\end{align}
\end{lemma}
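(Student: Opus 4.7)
The plan is to unpack the definition of $\gamma^{(k')}_{UB_\epsilon}$ given in Lemma \ref{le:UB_epsilon}, namely
$$\gamma^{(k')}_{UB_\epsilon} = \sqrt{\frac{\binom{k}{d}}{k\, D(k')}} \cdot \left(\sqrt{\frac{\log M(k')}{\log p}+\epsilon} + \sqrt{\frac{\log\binom{k}{k'}}{\log p}+\epsilon}\right),$$
and to rewrite the prefactor as
$$\sqrt{\frac{\binom{k}{d}}{k\, D(k')}} = \frac{1}{\sqrt{k-k'}} \cdot \sqrt{\frac{(k-k')\binom{k}{d}}{D(k')\, k}},$$
so that the second square-root factor is exactly the correction factor $c := \sqrt{(k-k')\binom{k}{d}/(D(k')\,k)}$ appearing in \Cref{eq:lemma_first_part}. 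Pulling $1/\sqrt{k-k'}$ inside both inner square roots reduces the task to proving, for each term separately,
$$\frac{\log M(k')/\log p + \epsilon}{k-k'} \le 1+\rate{k}-2\rate{k-k'}+\epsilon, \qquad \frac{\log\binom{k}{k'}/\log p + \epsilon}{k-k'} \le \rate{k}-\rate{k-k'}+\epsilon,$$
which, since $k-k'\ge 1$ forces $\epsilon/(k-k')\le \epsilon$, further reduces to asymptotic bounds on the log-binomial quotients.

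To bound these I would apply Lemma \ref{le:log_binomial} twice. For $\log M(k')=\log\binom{k}{k'}+\log\binom{p-k}{k-k'}$ it yields $\log M(k') \le (k-k')(2 + \log k + \log(p-k) - 2\log(k-k'))$. Dividing by $(k-k')\log p$ and passing to the limit $p\to\infty$, the quotient tends to $1+\rate{k}-2\rate{k-k'}$ by the definition of the rate in \Cref{eq:rate} together with $\log(p-k)/\log p\to 1$ (which holds in the sparse regime $\rate{k}<1$). Analogously, $\log\binom{k}{k'}\le(k-k')(1+\log k-\log(k-k'))$ gives $\log\binom{k}{k'}/((k-k')\log p)\to \rate{k}-\rate{k-k'}$. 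The $O(1/\log p)$ slack between the finite-$p$ quotient and its rate-limit is absorbed harmlessly into $\epsilon$; this is fully consistent with the downstream use of this lemma in the proof of Theorem \ref{th:UB}, which passes through the reparametrization $\epsilon=\rate{k'}+\tilde{\epsilon}$ and lets $\tilde{\epsilon}$ dominate the vanishing corrections. Combining the two bounds produces \Cref{eq:lemma_first_part}.

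For the second inequality \Cref{eq:lemma_second_part} it suffices to show that the correction factor $c$ is at most $1$. Squaring, this is equivalent to $(k-k')/(\binom{k}{d}-\binom{k'}{d})\le k/\binom{k}{d}$, which is precisely the statement of Lemma \ref{le:binom_inequality}. The only delicate point in the whole argument is the asymptotic bookkeeping of the $O(1/\log p)$ corrections coming out of Lemma \ref{le:log_binomial}; no genuine combinatorial difficulty arises, and the whole lemma is essentially a routine translation of the combinatorial bounds $M(k')$, $\binom{k}{k'}$ and $D(k')$ into the rate language of \Cref{eq:rate}.
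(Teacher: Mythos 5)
Your proposal is correct and follows essentially the same route as the paper: bound $\log M(k')$ and $\log\binom{k}{k'}$ via \Cref{le:log_binomial}, pass to rates, absorb the $O(1/\log p)$ slack into $\epsilon$, and invoke \Cref{le:binom_inequality} for the second inequality. You are in fact slightly more explicit than the paper about factoring the prefactor as $\frac{1}{\sqrt{k-k'}}\cdot\sqrt{(k-k')\binom{k}{d}/(D(k')k)}$ and distributing $1/(k-k')$ inside the square roots (using $k-k'\ge 1$ for the $\epsilon$ term), a bookkeeping step the paper's writeup leaves implicit.
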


\begin{proof}
Using the standard inequalities on binomial coefficients in \Cref{le:log_binomial} and the definition of rate in \Cref{eq:rate} we have
\begin{align*}
\frac{\log \binom{k}{k'}}{\log p} &\le (k-{k'})\frac{1 + \log k - \log (k-{k'})}{\log p} \\
& \approx  (k-{k'}) (\rate{k} - \rate{k - k'}),
\end{align*}
and analogously
\begin{align*}
\frac{\log \binom{p-k}{k -k'}}{\log p} \le &  (k-{k'})\frac{1 + \log (p-k) - \log (k-{k'})}{\log p} \\
& \approx (k-{k'}) (1 - \rate{k - k'}),
\end{align*}
thus implying 
\begin{align*}
\frac{\log M(k')}{\log n} = \frac{\log \left(\binom{k}{k'}\binom{p-k}{k-k'}\right)}{\log p} & \lesssim (k-{k'})(1+ \rate{k} -2 \rate{k - k'}) \ .
\end{align*}
By plugging this into the definition of $UB_\epsilon({k'})$ in \Cref{eq:UB:gamma_k}, we get
\begin{align*}
UB_\epsilon({k'}) &= \sqrt{{\frac{\log M(k')}{\log p}}+\epsilon} + \sqrt{{\frac{\log\binom{k}{k'}}{\log p}}+\epsilon} \\
&\lesssim  
\sqrt{1+ \rate{k} -2 \rate{k-k'} +\epsilon} + \sqrt{\rate{k}  - \rate{k - k'}+\epsilon} \ .
\end{align*}
Hence, using the definition of $\gamma^{({k'})}_{UB_\epsilon}$ given in \Cref{le:UB_epsilon}, we obtain \Cref{eq:lemma_first_part}.
To conclude the proof and have \Cref{eq:lemma_second_part} we finally use the simple manipulations of the \Cref{le:binom_inequality}.
\end{proof}

\begin{lemma}
\label{le:Kullback} 
For any two vectors $\bx',\bx'' \in \mathcal{C}_{p,k}$ we have
\begin{align*}
D(P_{\bx'} \Vert P_{\bx''}) \le \binom{k}{d}\beta^2\ ,
\end{align*}
where $D(\cdot \Vert\cdot)$ denotes
the Kullback-Leiber divergence. 
\end{lemma}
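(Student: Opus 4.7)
The plan is to directly compute the Kullback--Leibler divergence using the Gaussian form of $P_{\bx'}$ and $P_{\bx''}$ given in \Cref{eq:flat-low}, and then bound the resulting squared distance between the means by a simple combinatorial count.

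First I would note that both $P_{\bx'}$ and $P_{\bx''}$ are $\binom{p}{d}$-dimensional Gaussians with the same identity covariance $\id_{\binom{p}{d}}$ and means $\beta (\bx'^{\otimes d})^{(u)}$ and $\beta (\bx''^{\otimes d})^{(u)}$, respectively. For two Gaussians differing only in mean with identity covariance, the standard formula gives
\begin{equation*}
D(P_{\bx'} \Vert P_{\bx''}) = \frac{\beta^2}{2} \, \bigl\| (\bx'^{\otimes d})^{(u)} - (\bx''^{\otimes d})^{(u)} \bigr\|_2^2 .
\end{equation*}

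Next I would evaluate the squared norm on the right-hand side combinatorially. Because $\bx', \bx'' \in \mathcal{C}_{p,k}$ are $\{0,1\}$-valued vectors, the unfolded tensor $(\bx^{\otimes d})^{(u)}$ has a $1$ in coordinate $i_1 < i_2 < \cdots < i_d$ exactly when $\{i_1,\ldots,i_d\}$ is contained in the support of $\bx$, and $0$ otherwise. Hence $(\bx'^{\otimes d})^{(u)}$ and $(\bx''^{\otimes d})^{(u)}$ are binary vectors, and their squared $\ell_2$ distance equals the number of $d$-subsets that lie in exactly one of the two supports. Writing $r = \langle \bx', \bx'' \rangle$, the number of $d$-subsets contained in both supports is $\binom{r}{d}$, so
\begin{equation*}
\bigl\| (\bx'^{\otimes d})^{(u)} - (\bx''^{\otimes d})^{(u)} \bigr\|_2^2
= 2 \left( \binom{k}{d} - \binom{r}{d} \right) \le 2 \binom{k}{d}.
\end{equation*}
Plugging this bound back into the Gaussian KL formula yields $D(P_{\bx'} \Vert P_{\bx''}) \le \binom{k}{d}\beta^2$, as claimed.

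There is no real obstacle here: the argument is essentially two lines once one recognises that the unfolded indicator tensors are $\{0,1\}$-valued and agree on precisely the $\binom{r}{d}$ coordinates corresponding to common $d$-subsets of the two supports. The only minor bookkeeping point is to confirm that the unfolding is performed consistently over strictly ordered tuples $i_1 < \cdots < i_d$ so that each $d$-subset is counted exactly once; this is exactly the convention introduced in the proof of \Cref{th:infoTheoretic}.
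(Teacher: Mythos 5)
Your proof is correct and follows essentially the same route as the paper: apply the Gaussian KL formula with identity covariance and then bound the squared distance between the mean tensors by $2\binom{k}{d}$. The only cosmetic difference is that you evaluate the cross term explicitly as $\binom{r}{d}$ before discarding it, whereas the paper simply drops the (nonnegative) inner product $\langle (\bx'^{\otimes d})^{(u)},(\bx''^{\otimes d})^{(u)}\rangle$.
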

\begin{proof}
Since $P_{\bx'}$ and $P_{\bx''}$ are a Gaussian probability distributions \eqref{eq:flat-low}, we have
\begin{align*}
D(P_{\bx'} \Vert P_{\bx''}) &= \frac{1}{2} \beta^2  \|(\bx'\od)^{(u)} - (\bx''\od)^{(u)}\|^2 \\
&= \beta^2 ( \|(\bx'\od)^{(u)} \|^2 
- \<(\bx'\od)^{(u)},(\bx''\od)^{(u)}\> ) 
\\
&\le \beta^2 \|(\bx'\od)^{(u)} \|^2 =  \beta^2\binom{k}{d}\ .
\end{align*}
\end{proof}

\section{Derivations for \texorpdfstring{\Cref{sec:amp}}{} (AMP algorithm)}
\label{app:amp}
\subsection{Derivation of the AMP iterative equations}
In the following we adopt conveniently the notation used in \cite{lesieur2017statistical}, denoting as $$\bm{S} = \beta \sqrt{\binom{p-1}{d-1}} \bm{Y}.$$ 
Throughout this section, we shall make the following assumption on the above rescaled tensor, i.e., on how $\beta$ scales with respect to the entries of $\bm{Y}$. \begin{assumption}
\label{asm:less-one-amp}
For every $d$-tuple  $i_1i_2\cdots i_d$ of distinct indices, it holds that $S_{i_1i_2\cdots i_d} \le 1$, that is, $\beta \le 1/{\sqrt{\binom{p-1}{d-1}}
Y_{i_1i_2\cdots i_d}
}
$. 
\end{assumption}
A stronger assumption is the following one:
\begin{assumption}
\label{asm:vanish-amp}
For every $d$-tuple  $i_1i_2\cdots i_d$ of distinct indices, it holds that $S_{i_1i_2\cdots i_d} \in o(1)$.
\end{assumption}

We next present a message passing algorithm  which is a simple generalization of the algorithm for matrix \cite{lesieur2017constrained} and tensor-PCA \cite{lesieur2017statistical}. The algorithm is described by the following iterative equations:

\begin{align*}
&x_{i\to i i_2 \dots i_d}^{(t)} = \frac{1}{\sqrt{\binom{p-1}{d-1}}} \sum_{\substack{k_2 < \dots < k_d \\ (k_2, \dots, k_d) \neq (i_2, \dots, i_d)}} \big(S_{i k_2 \dots k_d} \cdot \\
&\hspace{4cm} \cdot \hat{x}^{(t)}_{k_2 \to i k_2 \dots k_d} \cdot \ldots \cdot \hat{x}^{(t)}_{k_d \to i k_2 \dots k_d} \big) \\
&A_{i\to i i_2 \dots i_d}^{(t)} = \frac{1}{\binom{p-1}{d-1}} \sum_{\substack{k_2 < \dots < k_d \\ (k_2, \dots, k_d) \neq (i_2, \dots, i_d)}} \big( S^2_{i k_2 \dots k_d} \cdot \\
&\hspace{3cm} \cdot (\hat{x}^{(t)}_{k_2 \to i k_2 \dots k_d})^2 \cdot \ldots \cdot (\hat{x}^{(t)}_{k_d \to i k_2 \dots k_d})^2 \big)  \\
&\left(\hat{x}_{i \to i i_2 \dots i_d}^{(t+1)}\right)_i = f\left( (A_{ i \to i i_2 \dots i_d}^{(t)})_i, (x_{ i \to i i_2 \dots i_d}^{(t)})_i\right) 
\end{align*}
where $f(\cdot, \cdot)$ is the multidimensional threshold function defined in \Cref{eq:threshold_f_multidimensional} and $\left(\hat{x}_{i \to i i_2 \dots i_d}^{(t+1)}\right)_i$ denotes the $p$-dimensional vector indexed by $i$. Note that the function $f$ is applied element-wise on all indices other than $i$, that is, on $i_2, \ldots, i_d$. Since the messages depend weakly on the target factor, we can compute the messages including the factor $(i_2,\ldots,i_d)$ in the sums above, and dropping the factor node dependency as:
\begin{align}
&x_{i}^{(t)} = \frac{1}{\sqrt{\binom{p-1}{d-1}}} \sum_{i_2 < \dots < i_d} \big(S_{i i_2 \dots i_d} \cdot \nonumber \\
&\hspace{3cm} \cdot \hat{x}^{(t)}_{i_2 \to i i_2 \dots i_d} \cdot \ldots \cdot \hat{x}^{(t)}_{i_h \to i i_2 \dots i_d} \big)
\label{eq:amp_messages_x_i}\\
&A_{i}^{(t)} = \frac{1}{\binom{p-1}{d-1}} \sum_{i_2 < \dots < i_d} \big( S^2_{i i_2 \dots i_d} \cdot \nonumber \\
&\hspace{2cm} \cdot (\hat{x}^{(t)}_{i_2 \to i i_2 \dots i_d})^2 \cdot \ldots \cdot (\hat{x}^{(t)}_{i_d \to i i_2 \dots i_d})^2 \big)
\label{eq:amp_messages_A_i}\\
&\bm{\hat{x}}^{(t+1)} = f\left( \bm{A}^t, \bm{x}^t\right).  \nonumber
\end{align}
We can now analyze the error obtained by this simplification as:   
\begin{align}
&x_i^{(t)} - x_{i\to i i_2 \dots i_h}^{(t)} = \frac{1}{\sqrt{\binom{p-1}{d-1}}}\big( S_{i i_2 \dots i_d} \nonumber\\
&\hspace{3.2cm} \cdot \hat{x}^{(t)}_{i_2 \to i i_2 \dots i_d} \cdot \ldots \cdot \hat{x}^{(t)}_{i_d \to i i_2 \dots i_d} \big)
\label{eq:amp_x_error}\\
&A_i^{(t)} - A^{(t)}_{i\to i i_2 \dots i_d} = \frac{1}{\binom{p-1}{d-1}} \big(S^2_{i i_2 \dots i_d} \nonumber \\ 
&\hspace{2.2cm} \cdot (\hat{x}^{(t)}_{i_2 \to i i_2 \dots i_d})^2 \cdot \ldots \cdot (\hat{x}^t_{i_d \to i i_2 \dots i_d})^2\big).
\label{eq:amp_A_error}
\end{align}
Observe that by \Cref{asm:less-one-amp}, the error in $A$ is of lower order then the one in $x$.
Hence, we can estimate the error in $\bm{\hat{x}}^{t+1}$ by focusing on the quantity in \Cref{eq:amp_x_error}:
\begin{align}
&\hat{x}_{i\to i i_2 \dots i_d}^{(t)} - \hat{x}_i^{(t)} = f\left( (A^{(t-1)}_{ i \to i i_2 \dots i_d})_i, (x_{ i \to i i_2 \dots i_d}^{(t-1)})_i\right)_i + \nonumber\\ 
&\hspace{5cm} - f(\bm{A}^{(t-1)}, \bm{x}^{(t-1)})_i  \nonumber \\ 
& = - \partial_{x_i} f(\bm{A}^{(t-1)}, \bm{x}^{(t-1)})_i \frac{1}{\sqrt{\binom{p-1}{d-1}}} \big( S_{i i_2 \dots i_d} \cdot \nonumber \\
& \hspace{1cm} \cdot x_{i_2\to i i_2 \dots i_d}^{(t-1)} \cdot \ldots \cdot x_{i_d\to i i_2 \dots i_d}^{(t-1)}\big) + o\left(\frac{S_{i i_2 \dots i_d}}{\sqrt{\binom{p-1}{d-1}}}\right) \nonumber \\
& = - \partial_{x_i} f(\bm{A}^{(t-1)}, \bm{x}^{(t-1)})_i \frac{1}{\sqrt{\binom{p-1}{d-1}}} \big( S_{i i_2 \dots i_d} \cdot \nonumber\\
&\hspace{2cm} x_{i_2}^{(t-1)} \cdot \ldots \cdot x_{i_d}^{(t-1)} \big) + o\left(\frac{S_{i i_2 \dots i_d}}{\sqrt{\binom{p-1}{d-1}}}\right) \ .
\label{eq:amp_expansion_x_hat}
\end{align}
Plugging this last expansion into \Cref{eq:amp_messages_x_i}, and by recalling the definition of $\bm{\sigma}^{(t)}$ in \Cref{eq:amp_algorithm}, we get to leading order:
\begin{multline*}
x_i^{(t)} = \frac{1}{\sqrt{\binom{p-1}{d-1}}} \sum_{i_2 < \dots < i_d} S_{i i_2 \dots i_d} \hat{x}_{i_2}^{(t)} \cdot \ldots \cdot \hat{x}_{i_d}^{(t)} + \\
- \frac{d-1}{\binom{p-1}{d-1}} \sum_{i_2} \sigma_{i_2}^{(t)} \sum_{i_3 < \dots < i_d} S_{i i_2 \dots i_d}^2  \hat{x}_{i_3}^{(t)} \hat{x}_{i_3}^{(t-1)} \cdot \ldots \cdot \hat{x}_{i_d}^{(t)} \hat{x}_{i_d}^{(t-1)} + \\
\hspace{3cm} + o\left(\frac{S_{i i_2 \dots i_d}^2}{\binom{p-1}{d-1}}\right)
\end{multline*}
where the factor $d-1$ comes from the symmetry of choosing the index ($i_2$ in our case) for the lower order term in the product.  Note that all the other terms are of lower order and are hence discarded. 
Plugging \Cref{eq:amp_expansion_x_hat} into \Cref{eq:amp_messages_A_i} we obtain:
\begin{align*}
A_{i}^{(t)} = \frac{1}{\binom{p-1}{d-1}} \sum_{i_2 < \dots < i_d} S^2_{i i_2 \dots i_d} \cdot (\hat{x}^{(t)}_{i_2})^2 \cdot \ldots \cdot (\hat{x}^{(t)}_{i_d})^2 + \\
\hspace{3cm} + o\left(\frac{S_{i i_2 \dots i_d}^2}{\binom{p-1}{d-1}}\right) \ .
\end{align*}
\subsection{Derivation of the state evolution}
We assume here a typical condition for belief propagation algorithm, that is the statistical independence between the rescaled observation $\bm{S}$ and the estimates $\hat{\bx}$ (see for reference \cite{montanari2013statistical, lesieur2017constrained}). Given the independence, the central limit theorem applies for the right hand side of both \Cref{eq:amp_messages_x_i} and \Cref{eq:amp_messages_A_i}, so that the messages behave in the large $p$ limit as Gaussian random variables. We can hence describe the evolution of the AMP algorithm tracking only the average and variance of such messages (recall that $\bx$ is the planted solution): 
\begin{align}
\E_{\bZ} [x_i^{(t)}] &= \frac{1}{\sqrt{\binom{p-1}{d-1}}} \sum_{i_2 < \dots < i_d} \big( \E_{\bZ}[S_{i i_2 \dots i_d}] \cdot \nonumber\\
&\hspace{3cm} \cdot \hat{x}^{(t)}_{i_2 \to i i_2 \dots i_d} \cdot \ldots \cdot \hat{x}^{(t)}_{i_d \to i i_2 \dots i_d} \big)   \nonumber\\
&= \beta^2 \sum_{i_2 < \dots < i_d} \big( x_i \cdot x_{i_2} \cdot \ldots \cdot x_{i_d} \cdot \nonumber\\
&\hspace{3cm} \cdot \hat{x}^{(t)}_{i_2 \to i i_2 \dots i_d} \cdot \ldots \cdot \hat{x}^{(t)}_{i_d \to i i_2 \dots i_d} \big)   \nonumber\\
&=  \beta^2 \sum_{i_2 < \dots < i_d} x_i \cdot x_{i_2} \cdot \ldots \cdot x_{i_d} \cdot \hat{x}^{(t)}_{i_2} \cdot \ldots \cdot \hat{x}^{(t)}_{i_d} + \nonumber\\
&\hspace{4cm} + o(\beta^2)  \nonumber\\
&= \beta^2 x_i \cdot \bm{1}\{\bm{x} \circ \bm{\hat{x}}^{(t)}\}_i + o(\beta^2) \ .
\label{eq:se_expectation_x}
\end{align}
Analogously 
\begin{align}
\mathbb{V}_{\bm{Z}} [x_i^{(t)}] &= \frac{1}{\binom{p-1}{d-1}} \sum_{i_2 < \dots < i_d} \big(  \mathbb{V}_{\bm{Z}}[S_{i i_2 \dots i_d}] \cdot \nonumber \\
&\hspace{2cm} \cdot (\hat{x}^{(t)}_{i_2 \to i i_2 \dots i_d})^2 \cdot \ldots \cdot (\hat{x}^{(t)}_{i_d \to i i_2 \dots i_d})^2 \big)  \nonumber\\
&= \beta^2 \sum_{i_2 < \dots < i_d} (\hat{x}^{(t)}_{i_2 \to i i_2 \dots i_d})^2 \cdot \ldots \cdot (\hat{x}^{(t)}_{i_d \to i i_2 \dots i_d})^2   \nonumber\\
&=  \beta^2 \sum_{i_2 < \dots < i_d} (x_{i_2}^{(t)})^2 \cdot \ldots \cdot (x_{i_d}^{(t)})^2 + o(\beta^2)   \nonumber\\
&= \beta^2 \bm{1}\{\bm{\hat{x}}^t \circ \bm{\hat{x}}^t\}_i + o(\beta^2)
\label{eq:se_variance_x}
\end{align}
and 
\begin{align}
\E_{\bm{Z}} [A_i^{(t)}] &= \frac{1}{\binom{n-1}{d-1}} \sum_{i_2 < \dots < i_d} \big( \E_{\bm{Z}}[S_{i i_2 \dots i_d}^2] \cdot \nonumber \\
&\hspace{2cm} \cdot (\hat{x}^{(t)}_{i_2 \to i i_2 \dots i_d})^2 \cdot \ldots \cdot (\hat{x}^{(t)}_{i_d \to i i_2 \dots i_d})^2 \big)  \nonumber\\
&= \beta^2 \sum_{i_2 < \dots < i_d} (\beta^2 (x_i)^2 \cdot (x_{i_2})^2 \cdot \ldots \cdot (x_{i_d})^2 + 1) \nonumber\\ 
&\hspace{2cm} \cdot (\hat{x}^{(t)}_{i_2})^2 \cdot \ldots \cdot (\hat{x}^{(t)}_{i_d})^2 + o(\beta^2)   \nonumber\\
&= \beta^4 x_i \cdot \bm{1}\{\bm{x} \circ \bm{x} \circ \bm{\hat{x}}^{(t)} \circ \bm{\hat{x}}^t \}_i + \nonumber\\
&\hspace{2cm} + \beta^2 \bm{1}\{\bm{\hat{x}}^{(t)} \circ \bm{\hat{x}}^{(t)}\}_i + o(\beta^2) \nonumber\\
&= \beta^2 \bm{1}\{\bm{\hat{x}}^{(t)} \circ \bm{\hat{x}}^{(t)}\}_i + o(\beta^2)
\label{eq:se_expectation_A}
\end{align}
where in the second equality we used the fact that \begin{align*}
\E_{\bm{Z}}[S_{i i_2 \dots i_h}^2] =& \beta^2 (x_i)^2 \cdot (x_{i_2})^2 \cdot \ldots \cdot (x_{i_h})^2 + \\
&\hspace{-0.5cm} + \E_{\bm{Z}}[Z_{i i_2 \dots i_h}^2] + \beta x_i \cdot x_{i_2} \cdot \ldots \cdot x_{i_h} \E_{\bm{Z}}[Z_{i i_2 \dots i_h}] \\ =&  \beta^2 (x_i)^2 \cdot (x_{i_2})^2 \cdot \ldots \cdot (x_{i_h})^2 + 1.
\end{align*}
We now assume that the signal estimates $\bm{\hat{x}}^t$ are drawn from the true intractable posterior distribution $\P(\cdot| \bm{x}, \bm{Z})$. Given this assumption and using the Nishimori condition (see \cite{iba1999nishimori, lesieur2017statistical}) we obtain easily from \Cref{eq:se_variance_x} and \Cref{eq:se_expectation_A}: 
\begin{align*}
\mathbb{V}_{\bm{x}, \bm{Z}} [\bm{\hat{x}}^{(t)}] = \E_{\bm{x}, \bm{Z}} [\bm{A}^{(t)}] &= \beta^2 \E_{\bm{x}} \bm{1}\{\bm{\hat{x}}^{(t)} \circ \bm{\hat{x}}^{(t)}\} + o(\beta^2) \\
&=\beta^2 \E_{\bm{x}} \bm{1}\{\bm{x} \circ \bm{\hat{x}}^{(t)}\} + o(\beta^2) \\
&= \bm{\hat{m}}^{(t)} + o(\beta^2). 
\end{align*}
It can be easily seen that the \emph{variance} of the messages $A$ is of lower order, hence $\bm{A}^{(t)}$ can be approximated by only its mean. With the assumed gaussianity of messages, we can hence write using \Cref{eq:se_expectation_x} and \Cref{eq:se_variance_x} $\bm{x}^{(t)} = \bm{\hat{m}}^{(t)} \circ \bm{x} + \sqrt{\bm{\hat{m}}^{(t)}}  \circ \bm{z}$, with $\bm{z}$ being a $p$-dimensional standard Gaussian vector. The state evolution finally reads:
\begin{align*}
\bm{m}^{t+1} &= \frac{1}{\binom{p-1}{d-1}} \E_{\bm{x}, \bm{z}}[\bm{1}\{\bm{x} \circ \bm{\hat{x}}^{t+1}\}] \\
&=\frac{1}{\binom{p-1}{d-1}} \E_{\bm{x}, \bm{z}}[\bm{1}\{\bm{x} \circ f(\bm{\hat{m}}^t, \bm{\hat{m}}^t \circ \bm{x} + \sqrt{\bm{\hat{m}}^t} \circ \bm{z})\}] \ .
\end{align*}
\subsection{Analytical threshold for AMP recovery}
To get an analytical threshold, we start from the SE for factorizable prior as in \Cref{eq:se_factorized} and we study the fixed point of the recursive equation for the parameter of the Bernulli distribution $\delta = k/p \to 0$ in the large system limit. In particular the threshold function reads in the limit $f(a, x) = \delta e^{x - a/2} + O(\delta^2)$, hence the factorized SE becomes $m_{t+1} = \delta^2 \E_z [e^{\hat{m}_t/2 + \sqrt{\hat{m}_t} z}] = \delta^2 e^{\hat{m}_t}$ with $\hat{m}_t = \beta^2 \binom{p-1}{d-1} m_t$. It is easy to see that the critical bias $\beta$ to have a perfect overlap with the planted signal is
$$\beta_{\AMP} \coloneqq \sqrt{\frac{1}{e (d-1)} \frac{p^{2(h-1)}}{\binom{p-1}{d-1}} \frac{1}{k^{2(d-1)}}}.$$ 
Using the definition of normalized snr $\gamma$ in \Cref{eq:rescaling}, we obtain the threshold $\gamma_{\AMP}$ in \Cref{claim:amp_threshold}.

\begin{remark}[Validity of the AMP approximations] We can observe from the derivations above that the AMP equations and state evolution are carried out assuming the quantity $$\frac{S_{i_1, \dots i_h}}{\sqrt{\binom{p-1}{d-1}}} = \beta \sqrt{\binom{p-1}{d-1}} Y_{i_1, \dots i_d}$$ being small (which corresponds to \Cref{asm:vanish-amp} above). This is the case for the classic tensor-PCA for both dense and sparse signal with linear sparsity as in \cite{lesieur2017statistical}. However in the scenario here considered the effective sparsity of the problem defined by the parameter $\delta = k/p$ can be sub-linear, and hence non-trivial estimation requires a $\beta$ (hence an snr) such that the quantity above is not in $o(1)$. For this reason the derivations have to be considered non-rigorous in the regime used in this analysis. The rigorous presentation of AMP-like algorithms in effectively sub-linear sparse estimation problem, also in line of the approach proposed very recently in \cite{barbier2020all}, is left for future developments.
\end{remark}

\section{Comparison with Bounds in the literature}
\subsection{Tensor-PCA formulation}
\label{sec:TPCA}
We follow the terminology of \cite{hopkins2015tensor} to explain the similarities/differences between tensor-PCA and our problem.

\begin{definition}[Symmetric Tensor-PCA]\label{sec:tensor-PCA-formulation}
Given an input tensor $\bY=\tau \cdot \bv^{\otimes d} +\bZ$, where $\bv \in \reals^p$ is an arbitrary \emph{unit} vector, $\tau\geq 0$ is the signal-to-noise ratio, and $\bZ$ is a random noise tensor with iid standard Gaussian entries, recover the signal $v$ approximately. Moreover, the noise tensor is \textit{symmetric} and thus so is the input tensor as well, that is, $\bZ_{\indperm} = \bZ_{\ind}$ and  $\bY_{\indperm} = \bY_{\ind}$ for any permutation $\pi$.
\end{definition}	

\begin{definition}[Planted $k$-Densest Sub-Hypergraph] 
This problem is a variant of symmetric tensor-PCA in which we impose the following additional structure:
\begin{enumerate}
\item\label{cond:distinct}We consider only the $\binom{p}{d}$ entries with \emph{distinct} indices, that is, $\bY_{\ind}=0$ whenever $i_a=i_b$ for some $a$ and $b$.
\item The vector $\bv$ encodes a planted $k$-Subgraph and thus has \emph{exactly} $k$ entries equal to $1/\sqrt{k}$, and all other entries are equal to $0$.
\end{enumerate}
\end{definition}

\begin{remark}[Impact of diagonal entries -- \Cref{cond:distinct}]\label{rem:cond:distinct}
Dropping Condition~\ref{cond:distinct} leads to the variant in which we make the substitution $\binom{p}{d} \mapsto p^d$ in \Cref{cond:distinct} above.
\end{remark}

\begin{remark}[Rescalings]
Different papers consider different rescaling of the signal-to-noise ratio, that are here reported for convenience in \Cref{tab:snr}.
In general, consider a tensor 
\begin{align}
\label{eq:SNR-generic}
\bY =  \mu \cdot \bv^{\otimes d} +\bZ && \bZ \sim \Gaussian(0,\sigma^2)
\end{align}
where $\bv$ is a vector of unit length, and $\mu$ and $\sigma^2$ (signal and noise, respectively) determine the snr. By simply rescaling so that we have normally distributed Gaussian noise, this is the same as 
\begin{align*}
\bY =  \mu/\sigma \cdot \bv^{\otimes d} +\bZ && \bZ \sim \Gaussian(0,1)
\end{align*}
and since in our formulation (snr in \Cref{eq:rescaling}) we consider the planted solution as a 0-1 vector $\bx$ consisting of $k$ ones and $p-k$ zeros, we are effectively considering a planted signal $\beta \cdot \bx^{\otimes d} = \beta \sqrt{k^d} \cdot \bv^{\otimes d}$ where $\bv = \bx/\sqrt{k}$ is a unitary vector. Therefore, the tensor-PCA formulation in \Cref{eq:SNR-generic}
corresponds to 
\begin{align*}
\beta \sqrt{k^d}= (\mu/\sigma) && \Leftrightarrow && \gamma = (\mu/\sigma) \sqrt{\frac{\binom{k}{d}}{k^{d+1}}\cdot \frac{1}{2\log p}}
\end{align*}
We use this relation to convert the existing bounds for tensor-PCA in the literature to our snr $\gamma$ as shown in \Cref{tab:snr}. In [$\star$] we ignore the diagonal entries (see \Cref{rem:cond:distinct}). As we are implicitly considering the easier problem with the additional entries, the bounds that one obtains are in a sense ``optimistic'' for our original problem.
\end{remark}

\begin{table}[ht]
\caption{Signal to noise ratio scaling  in the literature}
\centering\ra{1.3}
\begin{tabular}{lll}
\toprule
\bfseries Tensor & \bfseries Noise & \bfseries snr ($\star$=ours)\\
\midrule
$\bY=\beta \cdot \bv^{\otimes d} +\bZ$ & 
$Z_{\ind} \sim  \Gaussian(0,1/(p(d-1)!))$ & $\beta$~\cite{richard2014statistical} \\
$\bY=\beta' \cdot \bv^{\otimes d} +\bZ$ & 	$Z_{\ind} \sim \Gaussian(0,2/(p\cdot d!))$ & $\beta'$~\cite{montanari2016limitation} \\
$\bY=\beta'' \cdot \bv^{\otimes d} +\bZ$ & 	$Z_{\ind} \sim \Gaussian(0,2/(p\cdot d!))$ & $\beta''$~\cite{perry2020} \\
$\bY=\tau \cdot \bv^{\otimes d} +\bZ$ & 	$Z_{\ind} \sim \Gaussian(0,1)$ & $\tau$~\cite{hopkins2015tensor} \\
$\bY=\lambda \sqrt{p}\cdot \bv^{\otimes d} +\bZ$ & $Z_{\ind} \sim \Gaussian(0,1)$ & $\lambda$~\cite{jagannath2020statistical,arous2020algorithmic} \\
$\bY = \sqrt{\lambda_p} \cdot \bv^{\otimes d} +\bZ$ & $Z_{\ind} \sim \Gaussian(0,1)$ & $\lambda_p$ \cite{niles2020all}\\ 
$\bY=\beta \sqrt{k^d} \cdot \bv^{\otimes d} +\bZ$ & $Z_{\ind} \sim \Gaussian(0,1)$ & 
$\gamma$ \cref{eq:rescaling} $\star$ \\
\bottomrule
\end{tabular}
\label{tab:snr}
\end{table}

\noindent
The information-theoretic bounds translated in our scale read as follows:
\begin{align*}
&\text{lower bound  \cite{richard2014statistical}:} \quad
\gamma \leq \sqrt{\frac{p\cdot d! \binom{k}{d}}{k^{d+1} }\frac{1}{20\log p}}\\
&\text{upper bound  \cite{richard2014statistical}:} \quad 
\gamma \geq \sqrt{\frac{p\cdot d! \binom{k}{d}}{k^{d+1} }\frac{\log d}{2\log p}}
\end{align*}
Sharper bounds have been obtained for detection and (weak) recoverability:
\begin{align*}
&\text{generic spherical prior \cite{perry2020}:} \quad
\gamma = \sqrt{\frac{p\cdot d! \binom{k}{d}}{k^{d+1} }\frac{\log d}{2\log p}}\\
&\text{Radamacher prior \cite{perry2020}:} \quad 
\gamma = \sqrt{\frac{p\cdot d! \binom{k}{d}}{k^{d+1} }\frac{\log d}{4\log p}}
\end{align*}
where the Radamacher prior bounds apply to one of the following restrictions: (i) the dense regimes with any sparsity constant $\rho \in (0,1]$ and $d \rightarrow \infty$ or (2) the vanishing sparsity regime $\rho \rightarrow 0$ and constant $d$. Note that in both cases \cite{richard2014statistical} and \cite{perry2020}, the bound are located at a scale $\sqrt{\frac{p}{k \log p}}$ that diverges for any rate $\rate{k} < 1$ considered in this paper. From this result, we can observe that the recovery above the $\gamma_{\UB}$ given in this paper is possible only thanks to the exploitation of the prior constraint, and it is not possible in general.

Sharp bounds on the MMSE estimator have been also obtained \cite{niles2020all} with the same Bernoulli prior considered here, and translate into the threshold 	
$$\gamma_{\MMSE} = \sqrt{1-\rate{k}}$$
which can be obtained by a proper rescale of the  planted vector $\bv$ so that the resulting tensor (without diagonal entries) has unit length.
Note that these bounds on the MMSE regard the problem of finding a vector with a positive non-vanishing correlation with the planted vector (weak recovery), and correspond to the lower bound $\gamma_{\LB}$ for the MLE provided here in the case of $d \to \infty, d \in o(k)$.

Algorithmic upper bounds provided by sum-of-squares (SOS) algorithms \cite{hopkins2015tensor} are:
\begin{align}
\gamma_{\SOS} \coloneqq \sqrt{\frac{p^{d/2}\binom{k}{d}}{k^{d-1}}\frac{1}{2 \log^{1/2} p}} 
\end{align}
for any $d \ge 3$. 
Note that also for the computational threshold, this bound is higher than the AMP threshold $\gamma_{\AMP} \approx \sqrt{p^{(1-\rate{k})(d-1)}}$ for $$\rate{k} > 1/2 - 1/d.$$

Further papers provide general bounds whose thresholds do \emph{not} have a closed form and apply to the easier problem of detection or hypothesis testing:
\begin{align*}
&\text{\cite{montanari2016limitation}} \quad \beta^{2}_d \coloneqq \inf_{q\in(0,1)} \sqrt{-\frac{1}{q^d}\log (1-q^2)} \\
&\text{\cite{jagannath2020statistical}} \quad \lambda_c \coloneqq \sup_{\lambda\geq 0}\left\{\sup_{t\in[0,1)} f_\lambda(t) \leq 0\right\} 
\end{align*}
with $f_\lambda(t) = \lambda^2 t^d + \log(1-t) + t $

\subsection{Prior bounds for the \texorpdfstring{$k$}{} densest subhypergraph}
\label{sec:prior_bounds_from_us}
We report the prior upper and lower bounds on the very same problem in \cite[Theorem 5]{corinzia2019}. For the sake of comparison, we rewrite the upper and lower bound there according to our scaled-normalized snr $\gamma$, and denote these bounds as $\gamma_{lb}$ and $\gamma_{ub}$, respectively. As we can see below, these bounds are very loose in most of the cases, $\gamma_{lb} \ll \gamma_{LB} \leq \gamma_{UB} \ll \gamma_{ub}$: 
\begin{align*}
\label{eq:old-bounds}
\gamma_{lb} = \sqrt{\frac{1}{d}}
\end{align*}
and
\begin{align*}
\gamma_{ub} = 
\begin{cases}
\sqrt{2} \ & \frac{\binom{k}{d}}{k}\frac{1}{\log p} \to 0  \\
2 \sqrt{1+c(1+\log 2)} \ & \frac{\binom{k}{d}}{k} \frac{1}{\log p} \to c\in (0,+\infty) \\
2 \sqrt{\frac{\binom{k}{d}}{k\log p}\cdot \frac{1 + \log 2}{1-\rate{k}}} \ & \rate{k} \in (0,1)
\end{cases}
\end{align*}
For instance, when $1 \ll d \ll k$ we have $\gamma_{lb} \to 0$ and $\gamma_{ub} \to +\infty$.

\end{appendices}

\clearpage
\bibliographystyle{IEEEtran}
\bibliography{bib}

\end{document}